\documentclass[twoside,11pt]{article}

\usepackage[preprint]{jmlr2e}

\usepackage[T1]{fontenc}
\usepackage{lmodern}
\usepackage{microtype}
\usepackage{amsmath,mathtools,bm}
\usepackage{booktabs,multirow,array}
\usepackage{xcolor}
\usepackage{tikz}
\usetikzlibrary{arrows.meta,positioning,shapes.geometric,fit,calc}
\usepackage[capitalise,noabbrev]{cleveref}
\usepackage{enumitem}
\usepackage{caption}
\usepackage{lastpage}
\hypersetup{colorlinks=true,linkcolor=blue!55!black,citecolor=blue!55!black,urlcolor=blue!55!black}

\makeatletter
\@for\thm:=proposition,corollary,lemma,definition,remark\do{%
  \expandafter\let\csname\thm\endcsname\relax\expandafter\let\csname end\thm\endcsname\relax}
\makeatother
\newtheorem{proposition}{Proposition}
\newtheorem{corollary}{Corollary}

\newtheorem{definition}{Definition}
\newtheorem{assumption}{Assumption}
\newtheorem{remark}{Remark}
\renewenvironment{proof}[1][Proof]{\par\noindent{\bf #1\ }}{\hfill\BlackBox\\[2mm]}

\newcommand{\E}{\mathbb{E}}
\newcommand{\R}{\mathbb{R}}
\newcommand{\rms}{\operatorname{rms}}
\newcommand{\etac}{\eta_{c}}
\newcommand{\bc}{b_{c}}
\newcommand{\aptq}{\alpha_{\mathrm{PTQ}}}
\newcommand{\aqat}{\alpha_{\mathrm{QAT}}}
\newcommand{\norm}[1]{\left\lVert #1\right\rVert}

\ShortHeadings{Depth Laws for the Precision Floor}{Tarawneh}
\firstpageno{1}

\begin{document}

\title{Depth Laws for the Precision Floor of Trained Neural Networks:
Amplification, Residual Scaling, and a Quantization-Aware Training Paradox}

\author{\name Ahmad S. Tarawneh \email Ahmad.trwh@mutah.edu.jo \\
       \addr Faculty of Information Technology\\
       Mutah University\\
       Al-Karak, Jordan}

\maketitle

\begin{abstract}
How many bits does a network need before its accuracy collapses, and how does this grow with depth? We
study the \emph{precision floor}, the perturbation level or bit-width at which accuracy falls halfway to
chance, in MLPs, CNNs, Vision Transformers and nine pretrained language models, under post-training
quantization (PTQ) and quantization- or noise-aware training (QAT).
(i) A first-order theory sets the floor through one full-precision quantity, the \emph{predictive
amplification} $G$: $\etac=\Lambda/G$, and $G^2$ grows linearly in depth at a rate proportional to the
squared residual branch scale.
(ii) The predicted equality $\aptq=\rho$ of depth exponents holds within 95\% intervals in twelve of
thirteen trained architectures and in GPT-2 from 12 to 48 layers, with $\Lambda=1.45\pm14\%$ across
trained architectures.
(iii) Residual branches scaled by $1/\sqrt{D}$ and pre-normalisation remove the depth penalty, and each
quantizer turns noise into bits at a rate fixed by its step rule, giving $\bc=(\alpha/\gamma)\log_2D+C$.
(iv) A \emph{QAT paradox}: noise-aware training roughly doubles the tolerable noise of shallow networks,
but the gain decays with depth, so the depth law steepens ($\aqat/\aptq=1.45$--$1.47$ on two datasets,
ten seeds each).
Decision margins, cross-layer error cancellation and heavy tails do not set the floor.
\end{abstract}

\begin{keywords}
  quantization, numerical precision, depth scaling laws, perturbation amplification, quantization-aware training
\end{keywords}

\section{Introduction}\label{sec:intro}

Low-precision inference is now the default way to deploy neural networks, and a large literature develops
quantizers, calibration schemes and quantization-aware training (QAT) recipes that push weights and
activations to four bits or fewer \citep{jacob2018quantization,esser2020lsq,nagel2021white,gholami2021survey}.
Much less is known about a simpler structural question: \emph{how does the precision a network needs depend
on its architecture, and in particular on its depth?} Every layer of a quantized network injects error, and
every later layer can amplify or attenuate it. Depth should therefore matter---but in which direction, by how
much, and does training with quantization change the answer?

This paper gives quantitative answers. We define the \emph{precision floor} of a network as the perturbation
level $\etac$ (or bit-width $\bc$) at which test accuracy falls halfway from its full-precision value to
chance, and we measure how it scales with depth $D$ across four families: MLPs, CNNs, Vision Transformers
trained by us, and nine publicly released language models that we only evaluate. Across all of them the
floor follows a power law in depth,
\begin{equation}
  \etac(D) \;\propto\; D^{-\alpha},
  \qquad
  \bc(D) \;=\; \frac{\alpha}{\gamma_q}\,\log_2 D + C_q ,
  \label{eq:headline}
\end{equation}
whose exponent $\alpha$ is set by the architecture and whose conversion into bits is set by a single
quantizer constant $\gamma_q$. The organising idea is a two-link chain (\cref{fig:concept}): the architecture
determines how perturbations are \emph{amplified} on their way to the output distribution; amplification
determines the \emph{noise} tolerance; and the quantizer determines how noise tolerance translates into
\emph{bits}. The practical upshot is that the floor can be predicted before quantizing anything: across all
architectures we trained, $\etac\approx1.45/G$ to within about 15\%, where $G$ is measured at full precision
in seconds.

\begin{figure}[t]
\centering
\begin{tikzpicture}[
  node distance=9mm and 11mm,
  box/.style={draw, rounded corners=3pt, align=center, font=\small, minimum height=15mm, text width=27mm, line width=0.6pt},
  arch/.style={box, fill=gray!10},
  amp/.style={box, fill=blue!8, draw=blue!60!black},
  noise/.style={box, fill=teal!8, draw=teal!60!black},
  bits/.style={box, fill=orange!10, draw=orange!70!black},
  side/.style={box, fill=white, text width=30mm, minimum height=12mm},
  arrS/.style={-{Stealth[length=2.6mm]}, line width=0.9pt},
  arrD/.style={-{Stealth[length=2.6mm]}, line width=0.9pt, dashed},
  lab/.style={font=\scriptsize, midway, fill=white, inner sep=1.5pt}
]
\node[arch]  (A) {\textbf{Architecture}\\[1pt] depth $D$, branch\\ scale $s$};
\node[amp, right=of A] (G) {\textbf{Amplification}\\[1pt] $G^2 = G_0^2 + \kappa s^2 D$\\[1pt] {\scriptsize measured at full precision}};
\node[noise, right=of G] (N) {\textbf{Noise floor}\\[1pt] $\eta_c = \Lambda / G$\\[1pt] $\eta_c \propto D^{-\alpha}$};
\node[bits, right=of N, text width=33mm] (B) {\textbf{Bit floor}\\[2pt] $b_c=\tfrac{\alpha}{\gamma_q}\log_2 D + C_q$};
\node[side, below=10mm of N] (Q) {\textbf{QAT} (noise-aware)\\ $\eta_c^{\mathrm{QAT}} = g(D)\,\eta_c$,\ $g\!\downarrow$ with $D$};
\node[side, below=10mm of B] (U) {\textbf{Quantizer} $q$\\ $\eta_q(b)\propto 2^{-\gamma_q b}$};
\draw[arrS] (A) -- node[lab, above=1pt] {Prop.~\ref{prop:growth}} (G);
\draw[arrS] (G) -- node[lab, above=1pt] {Prop.~\ref{prop:collapse}} (N);
\draw[arrD]  (N) -- node[lab, above=1pt] {Cor.~\ref{cor:bitfloor}} (B);
\draw[arrD]  (N) -- node[lab, right=1pt] {Sec.~\ref{sec:qat}} (Q);
\draw[arrS] (U) -- node[lab, right=1pt] {Prop.~\ref{prop:gamma}} (B);
\node[font=\scriptsize, anchor=north west] at ($(A.south west)+(0,-6mm)$) {%
  \begin{tabular}{@{}l@{\ }l@{}}
  \tikz\draw[arrS] (0,0)--(6mm,0); & quantitatively confirmed\\[1pt]
  \tikz\draw[arrD]  (0,0)--(6mm,0); & approximate / conditional
  \end{tabular}};
\end{tikzpicture}
\caption{\textbf{The two-link structure of the precision floor.} Solid arrows are relations supported
quantitatively by our experiments; dashed arrows are approximate or conditional. The architecture fixes
the amplification $G(D)$ (\cref{prop:growth}); in PTQ the noise floor is $\etac\propto 1/G$
(\cref{prop:collapse}); the quantizer converts noise into bits at rate $\gamma_q$ (\cref{prop:gamma}).
Noise-aware training (QAT) raises $\etac$ but with a steeper depth exponent (\cref{sec:qat}).}
\label{fig:concept}
\end{figure}

\paragraph{Contributions.}
\begin{enumerate}[leftmargin=*,itemsep=2pt]
  \item \textbf{A first-order theory of the floor.} For small perturbations the change of the output
  distribution is governed by one full-precision quantity, the predictive amplification $G$; its square is a
  sum of per-site sensitivities and grows approximately linearly in depth with slope proportional to $s^2$, the squared
  residual branch scale; and under a collapse functional the noise floor is $\etac=\Lambda/G$
  (\cref{sec:framework}).
  \item \textbf{Amplification predicts the depth exponent, in four families.} The exponents of the floor and
  of the amplification agree within their 95\% intervals in twelve of thirteen trained architectures (MLPs,
  CNNs with and without normalisation, ViTs) and in GPT-2 from 12 to 48 layers, over exponents from $-0.57$
  to $+0.32$. The collapse constant $\Lambda=\etac G$ is $1.45\pm14\%$ across all trained architectures
  and $1.01$--$1.13$ across GPT-2 (\cref{sec:predictor,sec:vit,sec:llm}).
  \item \textbf{A validity condition and a measurement rule.} The law is first-order, and its premise can be
  checked on any model: $G$ must not depend on the noise level used to measure it. Two small pretrained
  Pythia models fail this check, and the law correctly makes no claim about them. For large-vocabulary
  models, amplification must be measured on the output distribution rather than the logits (\cref{sec:llm}).
  \item \textbf{Architecture controls the penalty.} Residual branches scaled by $1/\sqrt D$ and
  pre-normalisation make the floor depth-independent in MLPs, CNNs and ViTs; a fixed-scale control restores
  the penalty, isolating the scaling rather than the skip connection as the cause. In a depth-scaled ViT,
  deeper networks need fewer bits (\cref{sec:residual,sec:norm,sec:vit}).
  \item \textbf{A calibrated noise-to-bits bridge.} A quantizer's bits-to-noise rate follows from its step
  rule ($\gamma=\tfrac12$ for LSQ-style, $1$ for min--max scaling), matching measurement in every family, and
  turns noise laws into bit laws (\cref{sec:bits}).
  \item \textbf{The QAT paradox.} Noise-aware training roughly doubles the tolerable noise of shallow
  networks, yet the benefit shrinks with depth, so the QAT depth law is \emph{steeper} than the PTQ law:
  $\aqat/\aptq\approx1.5$ on two datasets with ten seeds each. The same decay appears in a CNN, where about
  half of it is a training-budget effect, and in a Vision Transformer. A per-layer sensitivity map shows that QAT reduces amplification uniformly and by
  a depth-independent factor, so the decay originates in the nonlinear collapse regime
  (\cref{sec:qat,sec:mechanism}).
\end{enumerate}
We also report hypotheses that did \emph{not} survive testing---decision margins, cross-layer error
cancellation and activation heavy tails do not set the floor (\cref{sec:negative})---and a width study which
shows that one of our own early claims, a parameter-free value of the growth constant, was a coincidence of
network width (\cref{app:width}). Predictions reported as tests were fixed before the corresponding experiment.

\section{Related work}\label{sec:related}

\paragraph{Sensitivity and required precision.}
The idea that a network's sensitivity to perturbations determines the precision it needs goes back at
least to \citet{sakr2017analytical}, who bound the probability that fixed-point quantization changes a
decision using noise gains computed by back-propagation, and use the bound to choose per-network
precision. Hessian- and sensitivity-based mixed-precision methods \citep{dong2019hawq} follow the same
principle to allocate bits across layers. Our amplification $G$ is a Monte-Carlo analogue of these noise
gains. What is new here is not the principle but its \emph{depth-scaling} form: we derive and measure how
$G$, and with it the floor, scales with depth across architectures, and we ask how training with noise
changes that scaling.

\paragraph{Depth, quantization and signal propagation.}
\citet{blumenfeld2019mean} analyse quantized activations with mean-field theory and derive a
quantization--depth trade-off for trainability at initialization, $L_{\max}\propto N^{1.82}$ for $N$
quantization levels. We study a different object---the accuracy-defined floor of \emph{trained} networks,
separately for PTQ and QAT---but the two results share the logic that depth multiplies the effect of
per-layer error. Our growth law (\cref{prop:growth}) rests on the same norm-propagation reasoning as
signal-propagation theory \citep{poole2016exponential,schoenholz2017deep}. The special role of
$1/\sqrt{D}$ branch scaling in residual networks is well established for stability and for well-defined
infinite-depth limits \citep{hayou2023width,bordelon2024depthwise,yang2024tensor}; we show its
consequence for the precision floor, and isolate it with a fixed-scale control. Recently,
\citet{soltanalian2026depth} developed a resource theory of when depth can compensate for precision in
quantized residual computation, and \citet{ou2024three} show that high-precision ReLU networks can be converted into deeper low-precision ones; both focus on expressivity rather than on the floor of trained classifiers.

\paragraph{Error accumulation in post-training quantization.}
\citet{chen2026ptq} study how quantization error accumulates across the layers of pretrained language
models and identify \emph{counteraction}---new error partially cancelling inherited error---as a reason
PTQ works well in practice, and \citet{arai2025qep} make cross-layer error propagation explicit in layer-wise PTQ. We test this mechanism in our networks and do not find it
(\cref{sec:negative}). \citet{residualfree2026} argue that residual connections make transformer
activations heavy-tailed and hence harder to quantize; we observe heavy tails growing with depth in all
our residual CNNs, but find that they do not by themselves make quantization disproportionately harmful.

\paragraph{Training with noise.}
Training with injected noise improves robustness to that noise; it is equivalent, to first order, to a
Tikhonov-type regulariser \citep{bishop1995training}. \citet{zhou2020noisy} show that deeper, narrower
networks lose information faster under weight noise and that noise-aware training and distillation
recover much of it; Quant-Noise \citep{fan2020training} trains with quantization noise on subsets of
weights. Relative Gaussian activation noise is mathematically Gaussian dropout
\citep{srivastava2014dropout}, whose implicit regularisation in deep linear networks has been
characterised \citep{mianjy2018implicit}. None of these works reports how noise-aware training changes the
\emph{depth exponent} of robustness; our finding that it steepens it (\cref{sec:qat}) is, to our
knowledge, new. At the scale of language models, ParetoQ \citep{liu2025paretoq} reports a pronounced
learning transition between two and three bits in QAT, and \citet{catalan2026training} show that training dynamics, in particular learning-rate decay, change how robust a model is to PTQ---consistent with the training-budget dependence we observe in \cref{sec:qat}.

\paragraph{Scaling laws.}
Our laws are empirical power laws with architecture-dependent exponents, in the tradition of neural
scaling laws \citep{kaplan2020scaling}; unlike most such laws, the exponent here is predicted by an
independent full-precision measurement.

\section{A first-order theory of the precision floor}\label{sec:framework}

\subsection{Perturbation model and the precision floor}

Let $f_\theta:\R^{d}\to\R^{K}$ be a trained network with logits $z = f_\theta(x)$, composed of $D$ layers.
A \emph{perturbation site} is any tensor that a quantizer would act on: each weight matrix $W_l$ and
each activation $a_l$ entering a matrix multiplication or convolution. We write $\mathcal S$ for the
set of sites and $x_i$ for the tensor at site $i\in\mathcal S$.

\begin{definition}[Relative Gaussian perturbation]\label{def:perturb}
For a noise level $\eta\ge0$, every site is replaced by
\begin{equation}
  \tilde x_i \;=\; x_i + \eta\,\rms(x_i)\,\xi_i ,\qquad \xi_i\sim\mathcal N(0,I),\ \text{independent across sites},
  \label{eq:perturb}
\end{equation}
where $\rms(x)=(\tfrac1n\sum_j x_j^2)^{1/2}$ over the entries of the tensor. We write
$\tilde z(x;\eta)$ for the perturbed logits and $A(\eta)$ for test accuracy under this perturbation.
\end{definition}

The perturbation is \emph{relative}: it is scale-free, so that $\eta$ has the same meaning in every
layer and every architecture, and it is the natural continuous surrogate for quantization, whose error
is proportional to the quantization step and hence to the scale of the tensor (\cref{sec:bitsbridge}).

\begin{definition}[Precision floor]\label{def:floor}
Let $A_0 = A(0)$ be full-precision accuracy and $1/K$ chance. The \emph{noise floor} $\etac$ is the
noise level at which accuracy falls halfway to chance, $A(\etac) = \tfrac12(A_0 + 1/K)$. For a
quantizer $q$ with bit-width $b$, the \emph{bit floor} $\bc$ is defined identically with $A(\eta)$
replaced by the accuracy of the $b$-bit quantized network. The \emph{depth exponent} $\alpha$ is defined
by $\etac(D)\propto D^{-\alpha}$; we write $\aptq$ when the network was trained at full precision and
perturbed only at test time, and $\aqat$ when the same perturbation was also present during training.
\end{definition}

\subsection{Amplification}

For small $\eta$, a first-order expansion of the network around the unperturbed forward pass gives
\begin{equation}
  \tilde z - z \;=\; \eta \sum_{i\in\mathcal S} \rms(x_i)\, J_i\,\xi_i \;+\; O(\eta^2),
  \qquad J_i = \frac{\partial z}{\partial x_i}.
  \label{eq:firstorder}
\end{equation}

\begin{definition}[Amplification]\label{def:G}
The \emph{amplification} of a network on a data distribution is
\begin{equation}
  G \;=\; \lim_{\eta\to0}\;\frac{1}{\eta}\,
  \frac{\big(\E_{x}\E_{\xi}\norm{\tilde z-z}^2\big)^{1/2}}{\big(\E_x\norm{z}^2\big)^{1/2}} ,
  \label{eq:Gdef}
\end{equation}
the relative logit perturbation per unit relative input perturbation. The \emph{amplification exponent}
$\rho$ is defined by $G(D)\propto D^{\rho}$.
\end{definition}

\begin{proposition}[Amplification as a sum over sites]\label{prop:sum}
Under \cref{def:perturb},
\begin{equation}
  G^2 \;=\; \sum_{i\in\mathcal S} q_i,
  \qquad
  q_i \;=\; \frac{\rms(x_i)^2\,\E_x\norm{J_i}_F^2}{\E_x\norm{z}^2}.
  \label{eq:Gsum}
\end{equation}
\end{proposition}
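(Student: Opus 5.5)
The plan is to substitute the first-order expansion \eqref{eq:firstorder} into the definition \eqref{eq:Gdef}, expand the squared norm, and use independence of the site noises to remove all cross terms. Only the diagonal (per-site) terms survive, and each one gives a Frobenius norm of a Jacobian.

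Fix an input $x$. Write $v_i=\rms(x_i)\,J_i\,\xi_i$, so that $\tilde z-z=\eta\sum_i v_i+r(\eta)$ with $r(\eta)=O(\eta^2)$. Then
\begin{equation*}
\E_\xi\norm{\textstyle\sum_i v_i}^2=\sum_{i,j}\E_\xi\, v_i^\top v_j .
\end{equation*}
For $i\neq j$ the noises $\xi_i,\xi_j$ are independent with mean zero, so $\E_\xi\, v_i^\top v_j=0$. For $i=j$,
\begin{equation*}
\E_\xi\norm{J_i\xi_i}^2=\operatorname{tr}\!\big(J_i^\top J_i\,\E[\xi_i\xi_i^\top]\big)=\norm{J_i}_F^2 ,
\end{equation*}
because $\E[\xi_i\xi_i^\top]=I$. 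Hence $\E_\xi\norm{\sum_i v_i}^2=\sum_i\rms(x_i)^2\norm{J_i}_F^2$.

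Next, take the expectation over $x$ and divide by $\eta^2\,\E_x\norm{z}^2$. This gives $\sum_i q_i$ plus remainder terms. One subtlety is that $\rms(x_i)$ depends on $x$ when $x_i$ is an activation. I would read $\rms(x_i)^2\,\E_x\norm{J_i}_F^2$ in \eqref{eq:Gsum} as $\E_x[\rms(x_i)^2\norm{J_i}_F^2]$. Equivalently, $\rms(x_i)$ can be treated as a fixed per-site scale, which is what a calibrated quantizer uses and is exactly true for weight sites. I would state this convention explicitly in the proof.

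The main obstacle is justifying the limit $\eta\to0$ inside the expectations. Expanding the square, the remainder contributes two kinds of terms:
\begin{itemize}[leftmargin=*,itemsep=2pt]
  \item a cross term $2\eta\,\E\big[(\textstyle\sum_i v_i)^\top r\big]=O(\eta^3)$;
  \item a squared term $\E\norm{r}^2=O(\eta^4)$.
\end{itemize}
Both vanish after dividing by $\eta^2$, but only if the $O(\eta^2)$ bound holds in mean square. With Gaussian $\xi$, whose support is unbounded, this needs a growth condition. I would assume that $f_\theta$ is twice differentiable in the site tensors with second derivatives of at most polynomial growth, and that the data have finite moments. Then the Taylor remainder is bounded by $C\eta^2(1+\norm{\xi}^p)$, whose moments are finite. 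Dominated convergence then gives
\begin{equation*}
\lim_{\eta\to0}\eta^{-2}\,\E\norm{\tilde z-z}^2=\sum_i\rms(x_i)^2\,\E_x\norm{J_i}_F^2 .
\end{equation*}
Taking square roots and dividing by $(\E_x\norm{z}^2)^{1/2}$ yields $G^2=\sum_i q_i$. For ReLU networks differentiability holds only almost everywhere. There I would note that the kinks form a measure-zero set under the Gaussian perturbation, and that the first-order statement is the one used throughout the paper.
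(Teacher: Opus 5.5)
Your proposal is correct and follows the paper's own argument: substitute the first-order expansion \eqref{eq:firstorder} into the squared norm, remove the cross terms using independence and zero mean of the $\xi_i$, use $\E_\xi\norm{J_i\xi_i}^2=\norm{J_i}_F^2$, then divide by $\eta^2\E_x\norm{z}^2$ and let $\eta\to0$. Your two additional points go beyond the paper's one-line proof without changing its route: you justify the $\eta\to0$ limit in mean square, and you note that pulling $\rms(x_i)^2$ outside $\E_x$ is exact only when the per-site scale does not depend on the input.
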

\begin{proof}
Square \eqref{eq:firstorder}, take the expectation over the independent $\xi_i$ (cross terms vanish
because $\E[\xi_i\xi_j^\top]=0$ for $i\ne j$ and $\E[\xi_i\xi_i^\top]=I$), use
$\E_\xi\norm{J_i\xi_i}^2=\norm{J_i}_F^2$, divide by $\eta^2\E_x\norm z^2$ and let $\eta\to0$.
\end{proof}

\paragraph{Measuring amplification in prediction-relevant directions.}
The logit norm in \eqref{eq:Gdef} counts every direction of logit space equally, but classification does
not: adding the same constant to all logits changes nothing after the softmax, and perturbations of
classes with negligible probability barely change the prediction. We therefore use the
\emph{predictive} amplification
\begin{equation}
  G \;=\; \lim_{\eta\to0}\;\frac{1}{\eta}\Big(2\,\E_x\E_\xi\,\mathrm{KL}\big(p(\cdot\mid x)\,\big\|\,\tilde p(\cdot\mid x)\big)\Big)^{1/2},
  \label{eq:Gkl}
\end{equation}
where $p=\mathrm{softmax}(z)$ and $\tilde p=\mathrm{softmax}(\tilde z)$. To second order in $\eta$,
$2\,\mathrm{KL}(p\,\|\,\tilde p) = (\tilde z-z)^{\top}F(\tilde z-z)$ with the softmax Fisher matrix
$F=\mathrm{diag}(p)-pp^{\top}$, which annihilates common shifts ($F\mathbf 1=0$) and weights each class by
its probability. \eqref{eq:Gkl} is thus the logit-perturbation norm of \eqref{eq:Gdef} measured in the
metric that the prediction itself uses. All results below use \eqref{eq:Gkl}. For ten-class models the two
definitions give similar exponents (differences of at most $0.1$), with \eqref{eq:Gkl} tracking the
collapse exponent more closely; for language models with $5\times10^4$ classes only \eqref{eq:Gkl} is
meaningful (\cref{sec:llm}). \Cref{prop:sum} holds for \eqref{eq:Gkl} with $\norm{J_i}_F^2$ replaced by
$\operatorname{tr}(J_i^{\top}FJ_i)$, and the propositions below are unchanged.

Each site contributes a non-negative \emph{sensitivity} $q_i$: the fraction of the logit energy that a
unit relative perturbation at site $i$ would produce. Depth enters only through the number of sites and
through how $q_i$ depends on the site's position. This yields a growth law under a propagation
assumption that is standard in signal-propagation analyses \citep{poole2016exponential,schoenholz2017deep}.

\begin{assumption}[Norm-preserving propagation]\label{ass:iso}
In the hidden layers, a relative perturbation of size $\epsilon$ introduced into the signal path produces,
to first order, a relative logit perturbation of size $c\,\epsilon$ with $c$ independent of the layer
index; weight and activation sites contribute comparably.
\end{assumption}

\cref{ass:iso} holds at initialization for ReLU networks with variance-preserving scaling, where the
input--output Jacobian preserves norms in expectation. Whether it holds after training is an empirical
question; we test it directly in \cref{sec:amplification}.

\begin{proposition}[Growth law for amplification]\label{prop:growth}
Consider a network with $D$ hidden layers, $k$ perturbation sites per layer, and residual updates
$h_{l+1} = h_l + s\,F_l(h_l)$ (a plain network corresponds to replacing the stream by the branch, i.e.\
$s=1$ without the identity path). Under \cref{ass:iso},
\begin{equation}
  G(D)^2 \;=\; G_0^2 + \kappa\, s^2 D + o(D),
  \qquad \kappa = k\,c^2,
  \label{eq:growth}
\end{equation}
where $G_0^2$ collects the depth-independent sites (stem and head). Consequently:
\begin{enumerate}[label=(\alph*),itemsep=1pt]
  \item \emph{plain networks:} $G\propto D^{1/2}$ asymptotically, i.e.\ $\rho\to\tfrac12$;
  \item \emph{residual networks with $s = s_0/\sqrt D$:} $G^2\to G_0^2+\kappa s_0^2$, i.e.\ $\rho = 0$;
  \item \emph{residual networks with fixed $s$:} the local exponent
  $\rho(D) = \tfrac{d\log G}{d\log D} = \tfrac{\kappa s^2 D}{2(G_0^2+\kappa s^2 D)}$
  increases monotonically from $0$ towards $\tfrac12$.
\end{enumerate}
\end{proposition}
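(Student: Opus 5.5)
The plan is to start from \cref{prop:sum}, which writes $G^2=\sum_{i\in\mathcal S}q_i$, and split the sites into two groups. The first group is the depth-independent ones: the stem and head, whose sensitivities sum to $G_0^2$. The second is the $kD$ sites inside the $D$ hidden layers. The whole argument then reduces to showing that each hidden-layer site contributes $q_i=c^2 s^2+o(1)$ uniformly in its layer index. Summing over the $kD$ such sites gives $\kappa s^2 D+o(D)$ with $\kappa=kc^2$. The same bookkeeping works for the predictive version \eqref{eq:Gkl}, with $\norm{J_i}_F^2$ replaced by $\operatorname{tr}(J_i^\top F J_i)$.

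\textbf{Per-site sensitivity.} Take a site inside branch $F_l$. A unit relative perturbation there changes $F_l(h_l)$ by a relative amount of order one. It enters the residual stream multiplied by $s$, so the stream perturbation is $s$ times the branch-scale perturbation.
\begin{itemize}
\item In the \emph{residual} case, I would normalise by assuming the branch output and the stream have comparable scale, which is what \cref{ass:iso} takes for granted. Then the relative perturbation injected into the signal path has size $s\epsilon$. \cref{ass:iso} says it reaches the logits as relative size $c\,s\epsilon$, independent of $l$. Squaring gives $q_i=c^2s^2$.
\item In the \emph{plain} case, the branch is the stream and we set $s=1$, so $q_i=c^2$ directly.
\end{itemize}
The clause ``weight and activation sites contribute comparably'' lets all $k$ sites of a layer share the same constant $c$.

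\textbf{Corollaries.} These are calculus on \eqref{eq:growth}.
\begin{itemize}
\item[(a)] With $s=1$, $G^2\sim\kappa D$, so $\log G=\tfrac12\log D+O(1)$ and $\rho\to\tfrac12$.
\item[(b)] With $s=s_0/\sqrt D$, we get $s^2D=s_0^2$, hence $G^2\to G_0^2+\kappa s_0^2$ and $\rho=0$.
\item[(c)] With fixed $s$, drop the $o(D)$ term and differentiate $\tfrac12\log(G_0^2+\kappa s^2D)$ in $\log D$. This gives $\rho(D)=\kappa s^2D/\bigl(2(G_0^2+\kappa s^2D)\bigr)$. The map $x\mapsto x/(2(G_0^2+x))$ is increasing, equals $0$ at $x=0$, and tends to $\tfrac12$ as $x\to\infty$, which gives the claimed monotonicity and limits.
\end{itemize}

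\textbf{Main obstacle.} The delicate step is making the per-site claim precise in the residual case. Under a fixed-$s$ residual update the stream norm itself grows with depth, roughly $\norm{h_l}^2\approx\norm{h_0}^2(1+s^2)^l$ or $\norm{h_0}^2(1+s^2 l)$. So the ratio of branch scale to stream scale is not constant, and ``relative perturbation of size $s\epsilon$'' is only a first-order statement. I would resolve this by reading \cref{ass:iso} as directly asserting the $l$-independence of the stream-to-logit relative gain, and absorbing any slowly varying deviations into the $o(D)$ remainder. For plain networks nothing extra is needed. I would flag that the $o(D)$ term must be uniform enough that it does not alter the asymptotic exponents in (a) and (c); I would require this by taking the error in $q_i$ to be $o(1)$ uniformly over the hidden sites.
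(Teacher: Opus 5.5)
Your proposal follows the paper's own argument: split $G^2$ via \cref{prop:sum} into stem/head sites and $kD$ hidden sites, give each hidden site $c^2s^2$ from \cref{ass:iso}, sum to $\kappa s^2D$, and obtain (a)--(c) by substitution and differentiation. The obstacle you flag is resolved in the paper's appendix without any $o(D)$ absorption. Because each branch acts on the stream itself, its output scales with the stream, so $r^2=\E\norm{F_l(h_l)}^2/\E\norm{h_l}^2$ is $O(1)$ and independent of $l$, and the geometric growth of $\norm{h_l}$ is harmless. Measuring the injection against $h_{l+1}$ then gives a per-site contribution $c^2s^2/(1+s^2r^2)$, which reduces to $c^2s^2$ only for $s^2r^2\ll1$; this is why the paper treats plain networks separately rather than as residual networks with $s=1$.
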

\begin{proof}[Proof sketch]
By \cref{prop:sum}, $G^2$ is a sum of site sensitivities. In a residual block, a unit relative
perturbation at a branch site perturbs the branch output by a relative amount of order one, which enters
the stream multiplied by $s$; relative to the stream it therefore has size $O(s)$, and by \cref{ass:iso}
it reaches the logits with relative size $c\,s$, contributing $c^2s^2$ to $G^2$. Summing $k$ sites over
$D$ blocks gives $\kappa s^2D$. Parts (a)--(c) follow by substitution and differentiation.
A full derivation, including the normalisation of the stream, is given in \cref{app:growth}.
\end{proof}

\begin{remark}\label{rem:kappa}
\Cref{prop:growth} fixes the \emph{form} of the depth dependence, not its constant. The coefficient
$\kappa=kc^2$ depends on how much of a perturbation reaches the prediction-relevant directions of the
output, and therefore on architecture and width. For width-32 MLPs we measure slopes per unit $s^2$ of
$18.8$ [16.4, 21.3] (plain) and $8.8$ [6.0, 11.6] (fixed-scale residual), and in the logit metric the slope
of plain MLPs falls by a factor of eight between widths 32 and 128 (\cref{app:width}). What the precision
floor inherits is the \emph{exponent}, and it is width-stable: $\rho = 0.51$ and $0.47$ for plain MLPs at
widths 32 and 128, against the asymptotic value $\tfrac12$ of \cref{prop:growth}(a).
\end{remark}

\subsection{From amplification to the noise floor}\label{sec:collapse}

Amplification describes the size of the logit perturbation; the floor also depends on how much
perturbation classification can absorb. Let $\mu(x) = (z_{(1)}-z_{(2)})/\norm{z}$ denote the
\emph{normalised margin} (gap between the two largest logits relative to the logit norm).

\begin{proposition}[Collapse functional]\label{prop:collapse}
Suppose that, conditionally on $x$, the logit perturbation is approximately isotropic Gaussian with
standard deviation proportional to $\eta\,G\,\norm{z(x)}$, as implied by \eqref{eq:firstorder} when many
sites contribute. Then test accuracy depends on $\eta$ and $G$ only through their product,
\begin{equation}
  A(\eta) = \bar A(\eta G), \qquad
  \bar A(u) = \E_x\!\left[\psi\!\left(\frac{\mu(x)}{u}\right)\right],
  \label{eq:collapse}
\end{equation}
for a fixed increasing function $\psi$. Hence
\begin{equation}
  \etac \;=\; \frac{\Lambda}{G},\qquad
  \aptq \;=\; \rho - \frac{d\log\Lambda}{d\log D},
  \label{eq:etac}
\end{equation}
where $\Lambda$ solves $\bar A(\Lambda)=\tfrac12(A_0+1/K)$ and depends only on the distribution of
normalised margins. If that distribution does not change systematically with depth, $\aptq = \rho$.
\end{proposition}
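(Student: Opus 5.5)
The plan is to reduce accuracy under perturbation to a one-dimensional Gaussian tail computation per input, and then read off $\etac$ and $\aptq$ by inverting a fixed scalar function. Fix $x$ with predicted class $y^\ast=\arg\max_k z_k(x)$. I will assume, as the statement allows, that the perturbation is isotropic Gaussian, $\tilde z-z=\eta G\norm{z(x)}\,\sigma_0\,\zeta$ with $\zeta\sim\mathcal N(0,I_K)$ and a fixed constant $\sigma_0$ absorbed into the normalisation of $G$. The prediction is preserved iff $\tilde z_{y^\ast}>\tilde z_k$ for all $k\ne y^\ast$. To leading order the binding competitor is the runner-up class. For it the event reads $z_{(1)}-z_{(2)}+\eta G\norm{z}\sigma_0(\zeta_{1}-\zeta_{2})>0$. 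Dividing by $\eta G\norm z$, the probability is $\Phi\bigl(\mu(x)/(\sqrt2\sigma_0\,\eta G)\bigr)$.

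To keep the argument exact rather than restricted to the top two classes, I will instead define
\[
\psi_x(t)=\Pr_\zeta\bigl[\arg\max_k(\hat z_k+\sigma_0\zeta_k/t)=y^\ast\bigr],
\qquad \hat z=z/\norm z .
\]
This is the form the statement intends, so I will write $\psi(\mu/u)$, accepting the approximation that $\psi_x$ depends on $\hat z$ only through $\mu(x)$ (exact for $K=2$, and the leading term otherwise). The next step handles the clean case, where the full-precision prediction is correct on a fraction $A_0$ of inputs and perturbed predictions on misclassified inputs are treated as contributing at chance. Averaging over $x$ then gives $A(\eta)=\E_x[\psi(\mu(x)/(\eta G))]$, which depends on $(\eta,G)$ only through $u=\eta G$. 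This is \eqref{eq:collapse}, with $\psi$ increasing because larger $t$ shrinks the noise. Some bookkeeping is needed so that $\bar A(0^+)=A_0$ and $\bar A(\infty)=1/K$: I will fold the correct-label indicator into the expectation, i.e.\ use $\psi$ applied to the signed margin relative to the true label. I expect this to be the fiddliest point, though it is not conceptually hard.

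Given \eqref{eq:collapse}, $\bar A$ is continuous and strictly decreasing in $u$, with limits $A_0$ at $u\to0$ and $1/K$ at $u\to\infty$. Since $A_0>1/K$, the intermediate value theorem gives a unique $\Lambda$ with $\bar A(\Lambda)=\tfrac12(A_0+1/K)$. By \cref{def:floor}, $A(\etac)=\bar A(\etac G)$, so $\etac G=\Lambda$. The quantity $\Lambda$ depends only on the law of $\mu(x)$, and on $A_0$ through the target level. Taking logs gives $\log\etac=\log\Lambda-\log G$. Differentiating in $\log D$ and using $\rho=d\log G/d\log D$ from \cref{def:G}, together with $\aptq=-d\log\etac/d\log D$, yields \eqref{eq:etac}. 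If the margin distribution and $A_0$ are depth-stationary, then $\Lambda$ is constant and $\aptq=\rho$.

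The main obstacle is justifying the Gaussian, isotropic, $\norm z$-proportional form of the perturbation conditionally on $x$. The Gaussian part follows from \eqref{eq:firstorder}: the perturbation is a sum of many independent Gaussian site terms, so it is exactly Gaussian at first order. Isotropy and a per-input scale equal to the global $G$ are genuine assumptions, since $G$ is an average over $x$. I will therefore take them as hypotheses, as the proposition does, rather than try to prove them.
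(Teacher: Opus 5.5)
Your proposal follows the paper's own argument: the paper's proof also notes that under the isotropic Gaussian approximation the probability that the top logit survives depends only on $\mu(x)\norm{z(x)}/(\eta G\norm{z(x)})=\mu(x)/(\eta G)$, averages over $x$, and solves the midpoint condition, and the exponent relation follows from $\log\etac=\log\Lambda-\log G$ exactly as you differentiate it. Your extra bookkeeping (signed margins relative to the true label, the intermediate value theorem for $\Lambda$, and the dependence of $\Lambda$ on $A_0$ through the target level) is more careful than the paper's sketch; the one overreach is your claim that $\bar A$ is strictly decreasing, which need not hold once misclassified inputs are included, but the argument only needs a root to exist, and continuity together with the two limits already gives that.
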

\begin{proof}
Under the stated approximation, the probability that the top logit is preserved depends only on the ratio
of the logit gap to the perturbation scale, $\mu(x)\norm{z(x)}/(\eta G\norm{z(x)}) = \mu(x)/(\eta G)$.
Averaging over $x$ gives \eqref{eq:collapse}; solving the midpoint condition gives \eqref{eq:etac}.
\end{proof}

\Cref{prop:collapse} is the source of our main prediction: \emph{the depth exponent of the precision floor
can be read off a full-precision measurement of amplification}. Two caveats are testable and we test
both. First, it is a first-order statement, while collapse occurs at finite $\eta$ (typically
$0.1$--$0.8$). Second, it assumes a depth-stable margin distribution; we show in \cref{sec:negative}
that the floor is in fact much less sensitive to margins than the idealised $\psi$ suggests, so we treat
$\Lambda$ as an empirical task constant.

\subsection{From noise to bits}\label{sec:bitsbridge}

A uniform quantizer with step $\Delta$ and no clipping produces errors that are approximately uniform on
$[-\Delta/2,\Delta/2]$, with root-mean-square $\Delta/\sqrt{12}$. Its relative error is therefore
$\eta_q(b) = \Delta(b)/(\sqrt{12}\,\rms(x))$, and the dependence of $\eta_q$ on $b$ is set entirely by how
the quantizer chooses its step.

\begin{proposition}[Bits-to-noise rate]\label{prop:gamma}
Let $q_p(b)\asymp 2^{b}$ be the number of positive levels and define $\gamma_q$ by
$\eta_q(b)\propto2^{-\gamma_q b}$. Then, in the no-clipping regime,
\begin{enumerate}[label=(\alph*),itemsep=1pt]
  \item \emph{LSQ-style scaling} \citep{esser2020lsq}, $\Delta = 2\,\E|x|/\sqrt{q_p}$, gives $\gamma_q = \tfrac12$;
  \item \emph{min--max scaling}, $\Delta = \max|x|/q_p$, gives $\gamma_q = 1$.
\end{enumerate}
\end{proposition}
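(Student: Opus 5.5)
The argument substitutes each step rule into the no-clipping error formula $\eta_q(b)=\Delta(b)/(\sqrt{12}\,\rms(x))$ of \cref{sec:bitsbridge}. We then read off the exponent of $2^{b}$, treating every statistic of the tensor itself ($\E|x|$, $\max|x|$, $\rms(x)$) as fixed, since none of them depends on $b$. The premise is the uniform-error approximation already stated before the proposition: without clipping, rounding error is approximately uniform on $[-\Delta/2,\Delta/2]$ with root-mean-square $\Delta/\sqrt{12}$. The only $b$-dependence therefore enters through $\Delta(b)$, and the proof reduces to computing how the step scales with $q_p(b)\asymp 2^b$.

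\emph{Case (a).} With $\Delta=2\E|x|/\sqrt{q_p}$ we get
\[
\eta_q(b)=\frac{2\,\E|x|}{\sqrt{12}\,\rms(x)}\,q_p(b)^{-1/2}.
\]
Since $q_p(b)\asymp 2^{b}$, this gives $\eta_q(b)\asymp C_x\,2^{-b/2}$. The prefactor $C_x=\E|x|/(\sqrt3\,\rms(x))$ lies in $(0,1/\sqrt3]$ by Jensen/Cauchy--Schwarz, so $\gamma_q=\tfrac12$.

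\emph{Case (b).} With $\Delta=\max|x|/q_p$ we get
\[
\eta_q(b)=\frac{\max|x|}{\sqrt{12}\,\rms(x)}\,q_p(b)^{-1}\asymp 2^{-b},
\]
so $\gamma_q=1$. Here the prefactor is the crest factor $\max|x|/\rms(x)\ge1$, which again does not depend on $b$.

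The step needing care is the meaning of ``$\propto$'' together with the no-clipping hypothesis. For a signed $b$-bit grid, $q_p=2^{b-1}-1$ rather than exactly $2^b$. The relation $\eta_q\propto2^{-\gamma_q b}$ should therefore be stated as an asymptotic proportionality, or equivalently as $\log_2\eta_q=-\gamma_q b+O(1)$ with $\gamma_q=-\lim_{b\to\infty} d\log_2\eta_q/db$; the constant offsets from $q_p$ are absorbed into $C_q$. The slope is computed as
\[
\gamma_q=-\lim_{b\to\infty}\frac{d\log_2\eta_q}{db}=-\lim_{b\to\infty}\frac{d\log_2\Delta(b)}{db}=e\cdot\lim_{b\to\infty}\frac{d\log_2 q_p(b)}{db}=e,
\]
where $e$ is the exponent with which $q_p$ enters the step rule ($\tfrac12$ or $1$).

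The main obstacle is justifying the uniform-error approximation and the absence of clipping uniformly in $b$.
\begin{itemize}
\item For min--max scaling, clipping is impossible by construction, because $\max|x|$ lies on the grid.
\item For LSQ-style scaling, the clipping threshold $\Delta q_p=2\E|x|\sqrt{q_p}$ grows like $2^{b/2}$, so the clipped mass vanishes as $b$ grows for any tensor with finite variance.
\end{itemize}
Hence the no-clipping regime is asymptotically self-consistent in both cases. The uniform-error approximation improves as $\Delta\to0$ for any tensor with a density, which is the standard high-resolution quantization argument. I would cite it rather than reprove it.
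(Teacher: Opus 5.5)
Your proposal is correct and follows the paper's own proof: substitute each step rule into $\eta_q=\Delta/(\sqrt{12}\,\rms(x))$, note that the ratios $\E|x|/\rms(x)$ and $\max|x|/\rms(x)$ do not depend on $b$, and read off $\eta_q\propto q_p^{-1/2}\propto 2^{-b/2}$ in case (a) and $\eta_q\propto q_p^{-1}\propto 2^{-b}$ in case (b). Your extra bookkeeping goes beyond what the paper needs, since the proposition simply assumes the no-clipping regime, but it does not change the proof; it covers the asymptotic reading of $\propto$ for $q_p=2^{b-1}-1$, the bounds on the prefactors, and the self-consistency of the no-clipping hypothesis, which for a finite tensor holds exactly once $2\,\E|x|\sqrt{q_p}\ge\max|x|$.
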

\begin{proof}
Substitute each step rule into
\[
  \eta_q = \frac{\Delta}{\sqrt{12}\,\rms(x)} .
\]
The factors $\E|x|/\rms(x)$ and
$\max|x|/\rms(x)$ do not depend on $b$, so $\eta_q\propto q_p^{-1/2}\propto2^{-b/2}$ in case (a) and
$\eta_q\propto q_p^{-1}\propto2^{-b}$ in case (b).
\end{proof}

We measure $\gamma = 0.53$ (MLPs), $0.55$ (CNNs) and $0.57$ (ViTs) for LSQ-style scaling and $\gamma = 1.04$
for min--max scaling (\cref{sec:bits}). The small excess over $\tfrac12$ comes from clipping at low bit-widths,
which \cref{prop:gamma} excludes.

\begin{corollary}[Bit-floor law]\label{cor:bitfloor}
If $\etac(D) = C_1 D^{-\alpha}$ and $\eta_q(b) = A_q 2^{-\gamma_q b}$, and if quantization error acts on
accuracy like Gaussian noise of equal relative size, then
\begin{equation}
  \bc(D) \;=\; \frac{\alpha}{\gamma_q}\,\log_2 D \;+\; \frac{1}{\gamma_q}\log_2\frac{A_q}{C_1}.
  \label{eq:bitfloor}
\end{equation}
\end{corollary}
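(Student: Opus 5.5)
The plan is to read the equivalence hypothesis as a statement about accuracy as a function of relative noise, and then solve the midpoint condition in $b$. By that hypothesis, the accuracy of the $b$-bit quantized network equals $A(\eta_q(b))$, where $A(\cdot)$ is the Gaussian-noise accuracy curve of \cref{def:perturb}. Both floors in \cref{def:floor} use the same midpoint condition $A=\tfrac12(A_0+1/K)$. Hence $\bc$ is characterised by $A(\eta_q(\bc))=A(\etac)$.

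To invert this I need $A$ to be strictly decreasing in $\eta$ near the floor, so that the midpoint level is crossed exactly once. I would justify this with \cref{prop:collapse}. There $A(\eta)=\bar A(\eta G)$, and $\bar A(u)=\E_x[\psi(\mu(x)/u)]$ is strictly decreasing in $u$ because $\psi$ is increasing. Alternatively, the monotonicity can simply be stated as part of what "acts like Gaussian noise" means, since the floor is only well defined when the crossing is unique. Granting this, $\eta_q(\bc)=\etac(D)$.

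Next I substitute the two power laws: $A_q2^{-\gamma_q\bc}=C_1D^{-\alpha}$. Taking $\log_2$ gives $\log_2A_q-\gamma_q\bc=\log_2C_1-\alpha\log_2D$. Solving for $\bc$ gives $\bc=\tfrac{\alpha}{\gamma_q}\log_2D+\tfrac1{\gamma_q}\log_2(A_q/C_1)$, which is \eqref{eq:bitfloor}. Here $\gamma_q>0$ by \cref{prop:gamma}, so the division is legitimate. I will also remark that $\eta_q$ is strictly decreasing in $b$, so the bit floor is unique. I will treat $b$ as a continuous variable, since the floor is an interpolated crossing point, as in \cref{def:floor}.

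The only real obstacle is the first step: turning "acts like Gaussian noise of equal relative size" into the identity of accuracy curves, together with the uniqueness of the crossing. Everything after that is algebra. I will make the hypothesis explicit as $A_b = A(\eta_q(b))$ and note that any mismatch factor between quantization and Gaussian noise could be absorbed into $A_q$. Such a factor shifts only the constant $C_q$, not the slope $\alpha/\gamma_q$.
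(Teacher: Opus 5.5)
Your proposal is correct and follows the paper's own argument, which is left implicit: the equivalence hypothesis gives $\eta_q(\bc)=\etac(D)$, and substituting the two power laws and taking $\log_2$ gives \eqref{eq:bitfloor}. Your added points are consistent with the paper's remark in \cref{sec:bits} that the quantization/Gaussian mismatch shifts only the intercept: the uniqueness of the midpoint crossing, and absorbing any mismatch factor into $A_q$ so that only the intercept moves.
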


The corollary separates a property of the \emph{network} ($\alpha$) from a property of the \emph{quantizer}
($\gamma_q$): the same network needs about twice as many additional bits per depth doubling under
LSQ-style scaling as under min--max scaling. Its last hypothesis is only approximate: real quantization
error is not Gaussian: at the floor, quantization error of a given size is up to about $1.5$ times as
harmful as Gaussian noise in CNNs and ViTs and about as harmful in MLPs (\cref{sec:bits}).

\subsection{Noise-aware training}

Training with the perturbation present (\emph{QAT}, or noise-aware training) changes the network, and
hence $G$ and $\Lambda$. We summarise its effect by the \emph{tolerance gain}
\begin{equation}
  g(D) \;=\; \frac{\etac^{\mathrm{QAT}}(D)}{\etac^{\mathrm{PTQ}}(D)},
  \qquad\text{so that}\qquad
  \aqat \;=\; \aptq \;-\; \frac{d\log g}{d\log D}.
  \label{eq:gain}
\end{equation}
Equation \eqref{eq:gain} is an identity. It makes the central empirical question of \cref{sec:qat}
precise: QAT \emph{flattens} the depth law if its benefit grows with depth ($g$ increasing) and
\emph{steepens} it if the benefit decays ($g$ decreasing). We find the latter.

\section{Experimental protocol}\label{sec:protocol}

\paragraph{Architectures.}
We use two model families and vary depth $D$ at fixed width (\cref{tab:archs}).
\emph{MLPs} have width 32 and $D$ weight layers ($784\to32$, then $D-2$ hidden $32\to32$ layers, then
$32\to10$). The residual MLPs are pre-activation networks with linear-ending branches,
$h_{l+1}=h_l+s\,(W_l\,\phi(h_l)+\beta_l)$, with either depth-scaled branches $s=1/\sqrt D$ or a fixed
scale $s=1/\sqrt8$; the two coincide at $D=8$, which makes the fixed-scale network a control that
isolates the scaling from the skip connection.
\emph{CNNs} have 64 channels, a stride-2 $3\times3$ stem and $D$ blocks of $3\times3$ convolutions,
followed by global average pooling and a linear head. We study plain CNNs, residual CNNs with
$s=1/\sqrt D$, $s=1/\sqrt8$ and $s=1$, and plain CNNs with Dirac (identity) initialisation, which are
trainable at large depth. The main architectures use no normalisation layers, because normalisation rescales activations layer by layer and acts as an implicit, data-dependent branch scale; we test two normalised CNN variants separately (\cref{sec:norm}).

\begin{table}[t]
\centering\small
\caption{Architectures and training. Depths are the values of $D$ used in the depth laws.}
\label{tab:archs}
\resizebox{\linewidth}{!}{\begin{tabular}{lllll}
\toprule
Family & Variant & Branch scale $s$ & Depths $D$ & Data, training \\
\midrule
MLP, width 32 & plain & -- & 4, 6, 8, 12, 16, 24 (32) & \multirow{3}{*}{\shortstack[l]{Fashion-MNIST, MNIST\\20k train, Adam, 3 epochs}} \\
 & residual & $1/\sqrt{8}$ & 8, 16, 32, 64 & \\
 & residual & $1/\sqrt{D}$ & 8, 16, 32, 64 & \\
\midrule
CNN, 64 channels & plain & -- & 4, 8, 16 & \multirow{7}{*}{\shortstack[l]{CIFAR-10, Adam + cosine,\\15 epochs (30 in control),\\flip + crop augmentation}} \\
 & residual & $1/\sqrt{D}$ & 4--64 & \\
 & residual & $1/\sqrt{8}$ & 4--64 & \\
 & residual & $1$ & 8--64 & \\
  & plain, Dirac init & -- & 8--64 & \\
 & pre-norm residual (LayerNorm) & $1$ & 4--64 & \\
 & post-norm plain (LayerNorm) & -- & 4--64 & \\
\bottomrule
\end{tabular}}
\end{table}

\paragraph{Perturbations and quantizers.}
Gaussian perturbations follow \cref{def:perturb} and are applied at every weight matrix and every
activation entering a matrix multiplication or convolution; the network input is not perturbed.
Quantization acts at exactly the same sites. Weights use a symmetric uniform quantizer per tensor and
activations an unsigned uniform quantizer per tensor. The \emph{LSQ-style} quantizer sets the step to
$2\,\E|x|/\sqrt{q_p}$ with clipping to the representable range; the \emph{min--max} quantizer sets it to
$\max|x|/q_p$. Activation steps are calibrated once on 5{,}000 training images; for MLPs we verified
that dynamic per-batch steps give indistinguishable bit floors (slopes $1.23$ vs.\ $1.24$ bits per depth
doubling for plain MLPs).

\paragraph{PTQ and QAT.}
In the \emph{PTQ} regime the network is trained at full precision and perturbed or quantized only at test
time. In the \emph{QAT} regime the same perturbation (Gaussian noise of level $\eta$, or $b$-bit
quantization with a straight-through gradient) is applied in every training forward pass and at test
time. Continuous noise is the primary instrument, because integer bit-widths resolve the floor only to
about one bit, which is too coarse to estimate exponents that differ by a fraction of a bit per depth
doubling.

\paragraph{Measurements.}
For MLPs and all noise-aware comparisons, the noise floor $\etac$ is estimated by fitting a logistic
function of $\log\eta$ to test accuracy pooled over seeds, over 5--12 noise levels spanning $0.01$--$3$,
and reading off the midpoint of \cref{def:floor}. For the PTQ floors of CNNs and ViTs it is interpolated
log-linearly per model between 14 noise levels from $0.02$ to $2$ (three draws each), and exponents are
fitted to the per-model values. The bit floor $\bc$ interpolates the midpoint between the tested bit-widths
$b\in\{1,\dots,6,8\}$. Amplification $G$ is estimated from \eqref{eq:Gdef} with $50$ noise draws per
level at $\eta\in\{0.001,0.005,0.01\}$ on $2{,}000$ test inputs; the ratio $G(0.001)/G(0.01)$ lies in
$0.98$--$1.06$ (architecture means), confirming the linear regime. Exponents are least-squares slopes in log--log coordinates;
we report 95\% $t$-intervals for single exponents and seed-bootstrap intervals for ratios and
differences of exponents. Headline MLP results use ten seeds; CNN results use three (two in the
30-epoch control).

\paragraph{Compute.}
All experiments ran on one server with two Intel Xeon Gold 6426Y processors (32 cores), 128~GB of memory
and one NVIDIA RTX A4000 GPU (16~GB), under Windows Server 2022 with Python 3.12, NumPy 2.4, PyTorch 2.5
(CUDA 12.1) and Hugging Face Transformers. The MLP experiments are CPU-only; the complete MLP grid takes
about 1.7 hours on 56 parallel processes. Measuring all trained CNNs, ViTs and the nine language models
takes about 3.5 GPU-hours; the most expensive single experiment, noise-aware training of the ViTs, takes
about 10 GPU-hours.

\paragraph{Pre-registration.}
Every quantitative prediction presented as a test in \cref{sec:results} (predicted exponents, the
residual-scaling outcomes, the 30-epoch decision rule) was written down before the corresponding runs.
Measurement artefacts found along the way are documented in \cref{app:artefacts}.

\section{Results}\label{sec:results}

\subsection{Amplification follows the growth law}\label{sec:amplification}

\Cref{fig:amp}(a) shows $G^2$ against depth for the three MLP families (width 32, ten seeds each). As
\cref{prop:growth} requires, $G^2$ grows approximately linearly in depth, with slope proportional to $s^2$
(the deepest point of each family lies somewhat above the line, which is why the fitted intercepts are not
meaningful on their own): the fitted
slope is $18.8$ [16.4, 21.3] for plain MLPs ($s=1$) and $1.10$ [0.75, 1.45] for the fixed-scale residual MLP
($s^2=1/8$, i.e.\ $8.8$ per unit $s^2$), while with $1/\sqrt D$ branch scaling $G^2$ does not grow at all
(slope $-0.09\pm0.05$). The corresponding exponents are $\rho=0.55\pm0.05$ (plain, asymptotically
$\tfrac12$), $0.29\pm0.07$ (fixed scale) and $-0.06\pm0.04$ (depth-scaled). As discussed in
\cref{rem:kappa}, the constant $\kappa$ is architecture- and width-specific; the exponents are what matter.

Trained CNNs and ViTs behave differently (\cref{fig:amp}(b,c)). In the standard variants amplification is
flat or decreases slightly with depth: training moves these networks away from the norm-preserving regime of
\cref{ass:iso} towards mild attenuation. Amplification grows in the networks built to be hard---unscaled
residual CNNs ($\rho=0.09\pm0.05$) and Dirac-initialised plain CNNs ($\rho=0.16\pm0.07$)---and in the
post-LN ViT ($\rho=0.04\pm0.01$). The depth-scaled ViT attenuates the most ($\rho=-0.12\pm0.03$).

\begin{figure}[t]
\centering
\includegraphics[width=\linewidth]{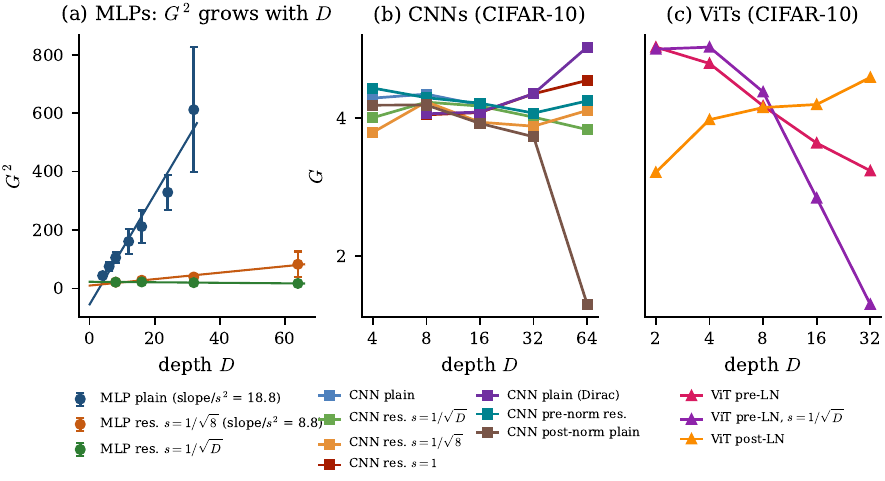}
\caption{\textbf{Amplification} (predictive metric \eqref{eq:Gkl}). (a) MLPs: $G^2$ is linear in depth
with slope $\propto s^2$ (\cref{prop:growth}); bars are standard deviations over ten seeds. (b) CNNs and
(c) ViTs on CIFAR-10 (three seeds): trained standard networks attenuate slightly with depth; unscaled
residual and Dirac-initialised CNNs amplify.}
\label{fig:amp}
\end{figure}

\subsection{Amplification predicts the depth exponent}\label{sec:predictor}

\Cref{prop:collapse} predicts $\aptq=\rho$. \Cref{fig:predictor} and \cref{tab:exponents} test this across
thirteen trained architectures in three families and the four GPT-2 models. In every case except one, the
95\% intervals of the two exponents overlap, across a range from $-0.15$ to $+0.32$, and in eleven of the
thirteen trained architectures the exponents differ by at most $0.08$: fixed-scale residual MLPs
($\rho=0.29$, $\aptq=0.32$), depth-scaled residual MLPs, six of the seven CNN variants---including the constructed
positive penalty of the Dirac-initialised CNN ($\rho=0.16$, $\aptq=0.17$)---all three ViTs, and GPT-2 from 12
to 48 layers ($\rho=-0.57$, $\aptq=-0.53$; \cref{sec:llm}). The exception is the plain MLP, which collapses
faster than its amplification predicts ($\aptq=0.70$, $\rho=0.55$); the gap closes with width
(\cref{app:width}) and is discussed in \cref{sec:discussion}. The post-norm CNN agrees only within its wide interval
($\aptq=-0.15$, $\rho=-0.36\pm0.24$), because its deepest models are undertrained.

The mechanism can be tested more directly. \Cref{prop:collapse} states that $\Lambda=\etac G$ is a
task-level constant that does not depend on depth. \Cref{fig:residual}(b) shows that it is also
remarkably stable \emph{across} architectures and families: over all thirteen trained architectures and
all depths, $\Lambda = 1.45$ with a coefficient of variation of $14\%$ (range $1.17$--$1.81$, excluding
one undertrained post-norm CNN at $D=64$), and for GPT-2 it is $1.01$--$1.13$ from 12 to 48 layers. The
noise floor of a network can therefore be predicted, to within about $15\%$, from a single full-precision
measurement: $\etac\approx1.45/G$.

\begin{figure}[t]
\centering
\includegraphics[width=0.82\linewidth]{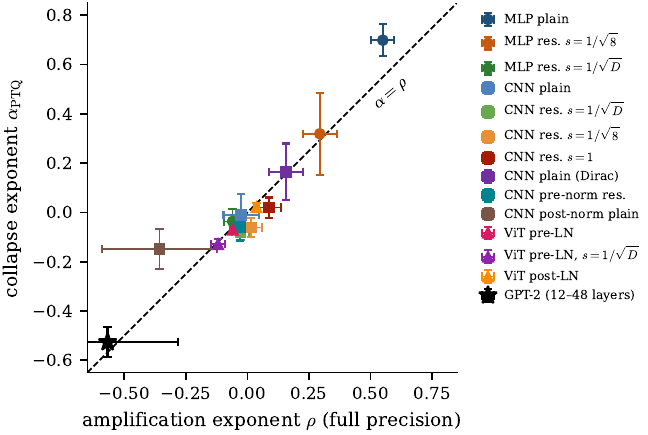}
\caption{\textbf{Amplification predicts the depth exponent.} Amplification exponent $\rho$ (full
precision, small noise) against the exponent $\aptq$ of the PTQ noise floor for thirteen trained
architectures and the GPT-2 family (star); bars are 95\% intervals. All points except the width-32 plain
MLP (top right) lie on the identity line predicted by \cref{prop:collapse}.}
\label{fig:predictor}
\end{figure}

\begin{table}[t]
\centering\small
\caption{\textbf{Depth exponents and collapse constants.} $\rho$: amplification exponent; $\aptq$: exponent
of the PTQ noise floor; $\Lambda$: range of $\etac G$ over depths; bit slope: measured bits per depth
doubling of the calibrated LSQ-style bit floor; last column: slope predicted by \cref{cor:bitfloor} with the
measured $\gamma$ ($0.53$ MLP, $0.55$ CNN, $0.57$ ViT). Brackets are 95\% intervals. The Dirac CNN has no
bit floor because its identity-shaped kernels break the LSQ-style step rule.}
\label{tab:exponents}
\resizebox{\linewidth}{!}{\begin{tabular}{llccccc}
\toprule
Family & Architecture & $\rho$ (amplification) & $\alpha_{\mathrm{PTQ}}$ (collapse) & $\Lambda=\eta_c G$ & bit slope & $\alpha_{\mathrm{PTQ}}/\gamma$ \\
\midrule
MLP & plain & $+0.55$ {\scriptsize[+0.50, +0.60]} & $+0.70$ {\scriptsize[+0.63, +0.76]} & 1.39--1.81 & $+1.24$ & $+1.31$ \\
MLP & res. $s=1/\sqrt{8}$ & $+0.29$ {\scriptsize[+0.23, +0.36]} & $+0.32$ {\scriptsize[+0.15, +0.49]} & 1.67--1.77 & $+0.42$ & $+0.60$ \\
MLP & res. $s=1/\sqrt{D}$ & $-0.06$ {\scriptsize[-0.10, -0.02]} & $-0.04$ {\scriptsize[-0.09, +0.01]} & 1.69--1.78 & $-0.02$ & $-0.07$ \\
CNN & plain & $-0.03$ {\scriptsize[-0.10, +0.05]} & $-0.01$ {\scriptsize[-0.10, +0.07]} & 1.40--1.43 & $+0.05$ & $-0.02$ \\
CNN & res. $s=1/\sqrt{D}$ & $-0.03$ {\scriptsize[-0.06, +0.01]} & $-0.08$ {\scriptsize[-0.11, -0.05]} & 1.40--1.65 & $-0.02$ & $-0.15$ \\
CNN & res. $s=1/\sqrt{8}$ & $+0.01$ {\scriptsize[-0.03, +0.06]} & $-0.06$ {\scriptsize[-0.10, -0.02]} & 1.34--1.68 & $-0.12$ & $-0.11$ \\
CNN & res. $s=1$ & $+0.09$ {\scriptsize[+0.04, +0.13]} & $+0.02$ {\scriptsize[-0.02, +0.06]} & 1.40--1.61 & $+0.30$ & $+0.04$ \\
CNN & plain (Dirac) & $+0.16$ {\scriptsize[+0.09, +0.23]} & $+0.16$ {\scriptsize[+0.05, +0.28]} & 1.30--1.46 & -- & $+0.30$ \\
CNN & pre-norm res. & $-0.03$ {\scriptsize[-0.07, +0.01]} & $-0.06$ {\scriptsize[-0.11, -0.01]} & 1.33--1.47 & $-0.08$ & $-0.11$ \\
CNN & post-norm plain & $-0.36$ {\scriptsize[-0.59, -0.12]} & $-0.15$ {\scriptsize[-0.23, -0.07]} & 0.68--1.35 & $+0.03$ & $-0.27$ \\
ViT & pre-LN & $-0.06$ {\scriptsize[-0.07, -0.05]} & $-0.07$ {\scriptsize[-0.08, -0.06]} & 1.29--1.34 & $-0.06$ & $-0.12$ \\
ViT & pre-LN, $s=1/\sqrt{D}$ & $-0.12$ {\scriptsize[-0.15, -0.09]} & $-0.13$ {\scriptsize[-0.15, -0.11]} & 1.28--1.37 & $-0.17$ & $-0.23$ \\
ViT & post-LN & $+0.04$ {\scriptsize[+0.03, +0.05]} & $+0.02$ {\scriptsize[+0.01, +0.04]} & 1.17--1.24 & $-0.12$ & $+0.04$ \\
\midrule
LLM & GPT-2, 12--48 layers & $-0.57$ {\scriptsize[-0.85, -0.28]} & $-0.53$ {\scriptsize[-0.59, -0.47]} & 1.01--1.13 & -- & -- \\
\bottomrule
\end{tabular}
}
\end{table}

\subsection{Residual scaling removes the depth penalty}\label{sec:residual}

\Cref{fig:residual}(a) shows the noise floors of the three MLP families. With depth-scaled branches the
floor does not decrease with depth: under PTQ it rises slightly from $\etac = 0.38$ at $D=8$ to $0.41$ at
$D=64$ ($\aptq = -0.04\pm0.05$, ten seeds), and under QAT it stays at $0.80$--$0.85$. The fixed-scale control,
identical at $D=8$ by construction, loses tolerance steadily ($\aptq = 0.32$, $\etac$ from $0.38$ to $0.20$).
The difference is caused by the depth dependence of the branch scale, not by the skip connection, as
\cref{prop:growth}(b)--(c) predicts. In CNNs the depth-scaled residual network is likewise flat; the
fixed-scale CNN is flat too, but so is its amplification, so the law predicts no penalty and none is observed.

\begin{figure}[t]
\centering
\includegraphics[width=\linewidth]{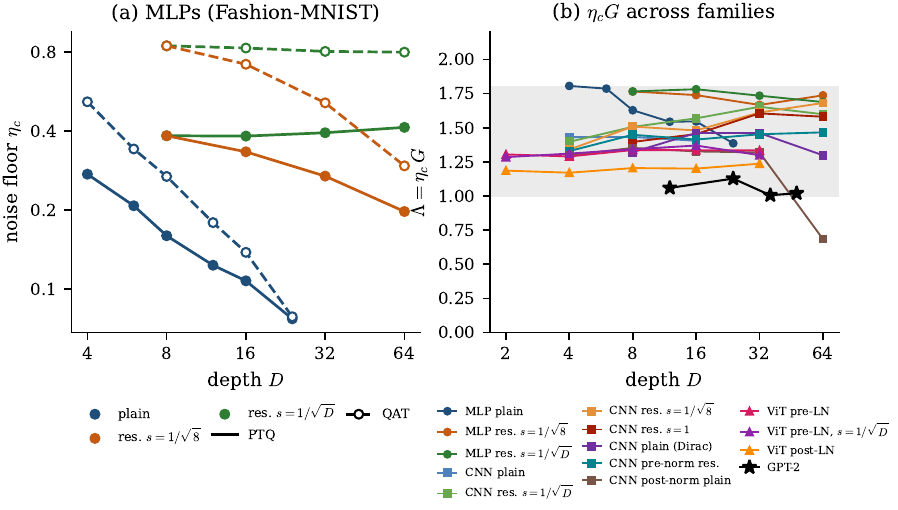}
\caption{\textbf{Residual scaling and the collapse constant.} (a) MLP noise floors under PTQ (solid) and QAT
(dashed): depth-scaled residual branches ($s=1/\sqrt D$) make the floor depth-independent; the fixed-scale
control does not. (b) $\Lambda=\etac G$ across depth for thirteen trained architectures in three families and
for GPT-2 (stars); the grey band marks $1.0$--$1.8$.}
\label{fig:residual}
\end{figure}

\subsection{Normalisation}\label{sec:norm}

Deployed networks are normalised, and normalisation rescales activations at every layer, so it could in
principle override the architecture effects above. We therefore trained two CIFAR-10 CNN variants with
LayerNorm over channels, keeping the normalisation layers at full precision as is common in deployment:
a \emph{pre-norm residual} network, $h_{l+1} = h_l + \mathrm{conv}(\phi(\mathrm{Norm}(h_l)))$ with unit
branch scale (the block used by pre-norm ResNets and transformers), and a \emph{post-norm plain} network,
$h_{l+1}=\phi(\mathrm{Norm}(\mathrm{conv}(h_l)))$. We fixed three predictions in advance. (1) In the
pre-norm network, each branch receives a normalised input and returns an $O(1)$ output, while the stream
norm grows like $\sqrt l$; branch $l$ then contributes $O(1/l)$ to $G^2$, so $G^2$ grows only
logarithmically and the floor should be nearly depth-independent---normalisation acts as an implicit
$1/\sqrt l$ branch scale. (2) In the post-norm plain network, normalisation preserves relative
perturbations, so $G$ should grow as in plain networks before training effects. (3) In both,
$\aptq\approx\rho$.

Predictions (1) and (3) hold; prediction (2) does not. The pre-norm residual CNN has
$\rho = -0.03\pm0.04$ and $\aptq = -0.06\pm0.05$, a flat bit floor ($2.5$--$2.7$ bits from $D=8$ to $64$,
slope $-0.08$ against a predicted $-0.11$) and stable accuracy ($0.67$--$0.69$) at all depths. The
post-norm plain CNN, like every other trained standard CNN, attenuates slightly after training
($\aptq = -0.15\pm0.08$; $\rho = -0.36\pm0.24$, a value dominated by its undertrained $D=64$ models), so the predictor holds but the untrained-regime
prediction of growth does not survive training. For both networks $\Lambda=\etac G$ lies in
$1.30$--$1.47$ (excluding the undertrained $D=64$ post-norm network), the same range as all other architectures (\cref{fig:predictor,fig:residual}). One
caveat: the post-norm plain CNN trains poorly at large depth (accuracy $0.54$ at $D=32$ and $0.27$ at
$D=64$), and its bit floor at those depths inherits the excess quantization harm we observed for other
undertrained deep models (\cref{sec:bits}).

\subsection{Transformers}\label{sec:vit}

We trained Vision Transformers on CIFAR-10 (patch size 4, width 128, four heads, MLP ratio 2, 30 epochs,
three seeds) at depths 2--32 in three variants: standard pre-LN blocks, pre-LN blocks with depth-scaled
branches ($s=1/\sqrt D$, as in depth-$\mu$P), and the original post-LN blocks. Quantization and noise act on
the weights and inputs of every linear layer (QKV, output projection, both MLP layers, patch embedding and
head); softmax and LayerNorm stay at full precision. Transformer activations are signed, so activations use
a signed LSQ-style quantizer, whose rate is again $\gamma\approx\tfrac12$ (measured $0.57$).

All three ViTs obey the law closely (\cref{tab:exponents}): $\rho=-0.06$ vs.\ $\aptq=-0.07$ (pre-LN),
$-0.12$ vs.\ $-0.13$ (pre-LN with depth scaling) and $+0.04$ vs.\ $+0.02$ (post-LN), with
$\Lambda=1.17$--$1.37$ at all depths. Accuracy is stable across depth ($0.76$--$0.84$), so these are also our
cleanest measurements, free of the undertraining confound. The depth-scaled ViT gives the most striking
single result: its amplification \emph{falls} with depth, so the tolerable noise \emph{rises} (from
$0.40$ at $D=2$ to $0.55$ at $D=32$) and the bit floor drops by $0.7$ bits (from $3.0$ to $2.3$). In a well-scaled transformer,
\emph{deeper can mean fewer bits}.

\subsection{Pretrained language models}\label{sec:llm}

All networks so far were trained by us. To test the law on models we did not design, we measured nine
publicly released language models without any training: GPT-2 small, medium, large and XL (12, 24, 36 and 48
layers) and Pythia-70M, 160M, 410M, 1B and 1.4B (6--24 layers). Noise and quantization act on the weights and
inputs of every linear layer inside the transformer blocks; embeddings and the output head are excluded,
because GPT-2 ties them. Accuracy is next-token top-1 accuracy on the WikiText-103 validation set, and the
floor is the noise level at which it halves. \Cref{tab:llm} and \cref{fig:llm} summarise the results.

\paragraph{The metric matters.} Measured with the logit norm \eqref{eq:Gdef}, amplification varied erratically
across the nine models and $\etac G$ ranged over two orders of magnitude. The reason is visible in the
data: in the two smallest Pythia models the logit-norm $G$ is three to six times smaller than the $G$ of
mean-centred logits, i.e.\ most of the logit vector moves along the softmax-invariant common direction, and in
GPT-2 medium it is almost three times larger. With the predictive definition
\eqref{eq:Gkl} the picture changes completely.

\paragraph{GPT-2 obeys the law across depth.} From 12 to 48 layers, the noise floor of GPT-2 doubles
($\etac$ from $0.146$ to $0.301$) while its amplification halves ($G$ from $7.3$ to $3.4$), so that
$\Lambda = 1.06, 1.13, 1.01, 1.02$ is constant to within $6\%$ and the exponents agree
($\aptq=-0.53\pm0.06$, $\rho=-0.57\pm0.29$; \cref{fig:llm}(a)). Deeper GPT-2 models are more robust
to perturbation, and full-precision amplification predicts by how much.

\paragraph{A validity condition.} \Cref{prop:collapse} is a first-order statement, and its premise can be
checked on any model: $G$ must not depend on the noise level used to measure it. All trained networks in
this paper pass ($G(0.001)/G(0.01)=0.98$--$1.06$, architecture means), as do all GPT-2 models and
Pythia-410M, 1B and 1.4B ($G(0.002)/G(0.01)=0.99$--$1.01$). The two
smallest Pythia models fail ($1.56$ and $1.87$): they respond nonlinearly even to $0.2\%$ noise, and the law
makes no claim about them. Among the models that pass, Pythia-1B and 1.4B have $\Lambda=1.21$ and $1.52$,
inside the range of our trained networks; Pythia-410M is an outlier ($\Lambda=3.7$ per tensor, $1.8$ with
per-channel perturbations), which we cannot explain. Per-channel perturbation, which neutralises activation
outliers, leaves the GPT-2 results essentially unchanged ($\Lambda=1.01$--$1.28$).

\begin{figure}[t]
\centering
\includegraphics[width=\linewidth]{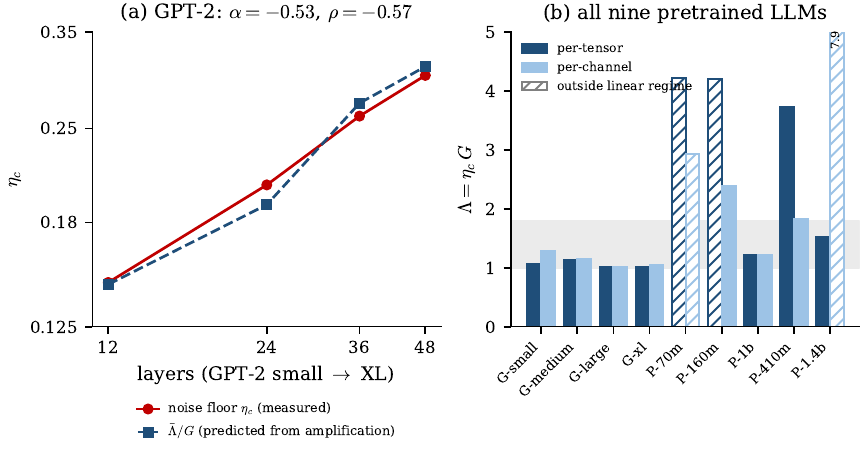}
\caption{\textbf{Pretrained language models (no training).} (a) GPT-2 from 12 to 48 layers: the measured
noise floor (red) and the prediction $\bar\Lambda/G$ from full-precision amplification (blue) coincide.
(b) $\Lambda=\etac G$ for nine pretrained models, per-tensor and per-channel perturbations; hatched bars fail
the linearity condition. The grey band is the range of our trained networks.}
\label{fig:llm}
\end{figure}

\begin{table}[t]
\centering\small
\caption{\textbf{Pretrained language models} (per-tensor perturbations). Accuracy: next-token top-1 on
WikiText-103; linearity: $G(0.002)/G(0.01)$; values in parentheses fail the linearity condition.}
\label{tab:llm}
\begin{tabular}{lrrccccc}
\toprule
Model & layers & width & accuracy & $\eta_c$ & $G$ & linearity & $\Lambda=\eta_c G$ \\
\midrule
GPT-2 small & 12 & 768 & 0.387 & 0.146 & 7.26 & 1.00 & 1.06 \\
GPT-2 medium & 24 & 1024 & 0.429 & 0.205 & 5.49 & 1.01 & 1.13 \\
GPT-2 large & 36 & 1280 & 0.442 & 0.261 & 3.85 & 1.00 & 1.01 \\
GPT-2 XL & 48 & 1600 & 0.449 & 0.301 & 3.39 & 1.00 & 1.02 \\
Pythia-70m & 6 & 512 & 0.339 & 0.050 & 84.82 & 1.87 & \textit{(4.22)} \\
Pythia-160m & 12 & 768 & 0.396 & 0.080 & 52.70 & 1.56 & \textit{(4.21)} \\
Pythia-1b & 16 & 2048 & 0.473 & 0.218 & 5.55 & 1.00 & 1.21 \\
Pythia-410m & 24 & 1024 & 0.457 & 0.126 & 29.69 & 1.01 & 3.73 \\
Pythia-1.4b & 24 & 2048 & 0.486 & 0.202 & 7.52 & 1.01 & 1.52 \\
\bottomrule
\end{tabular}

\end{table}

\subsection{From noise to bits}\label{sec:bits}

\Cref{fig:bits}(a) shows the relative quantization error $\eta_q(b)$ of the two step rules. The measured
bits-to-noise rates are $\gamma = 0.535$ (LSQ-style, MLPs), $0.550$ (LSQ-style, CNNs), $0.572$ (signed
LSQ-style, ViTs) and $1.039$ (min--max, CNNs), against $\tfrac12$ and $1$ from \cref{prop:gamma}. The small excess of the
LSQ-style rate comes from clipping at low bit-widths. The same network therefore needs roughly twice as
many extra bits per depth doubling under LSQ-style scaling as under min--max scaling.

\Cref{fig:bits}(b) and the last two columns of \cref{tab:exponents} test \cref{cor:bitfloor}. For MLPs the
calibrated bit slopes are $1.24$ (plain), $0.42$ (fixed scale) and $-0.02$ (depth-scaled), against
predicted $\aptq/\gamma$ of $1.31$, $0.60$ and $-0.07$. The fixed-scale prediction carries the wide
interval of its $\aptq$ ([0.15, 0.49], giving [0.28, 0.91] bits per doubling) and is consistent with the
measurement. For CNNs and ViTs all slopes are small and agree in sign and size with the prediction, with one
exception: the unscaled residual CNN needs $+0.30$ bits per doubling against a predicted $+0.04$.
Two further measurements locate the cause in the quantizer rather than the network. With min--max scaling
the same models give $+0.10$ against a predicted $+0.02$; and after 30 instead of 15 epochs of training,
the LSQ-style slope falls to $+0.08$ while the noise exponent stays near zero ($+0.03$). The excess penalty
therefore comes from mean-based clipping acting on the poorly converged activations of the deepest,
undertrained models. At the bit floor, the relative size of the quantization error compared with the
Gaussian floor, $\eta_q(\bc)/\etac$, is $0.64$--$1.02$ in CNNs and ViTs (lowest in the deepest,
undertrained models; $0.51$ for the post-norm CNN at $D=64$) and $0.93$--$1.12$ in MLPs: quantization is
somewhat more harmful than Gaussian noise of the same size in the CIFAR-10 networks, and about as harmful in
the MLPs. This shifts the intercept of \eqref{eq:bitfloor} but, except in the clipping case, not its slope.

\begin{figure}[t]
\centering
\includegraphics[width=\linewidth]{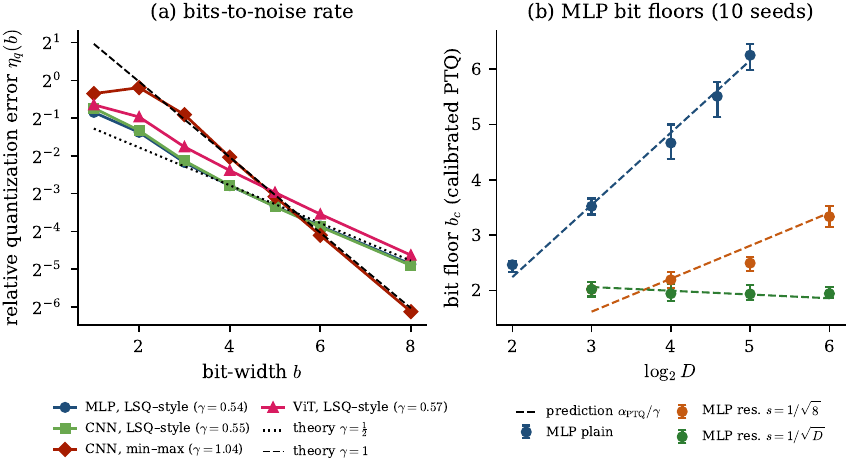}
\caption{\textbf{From noise to bits.} (a) Relative quantization error against bit-width. LSQ-style scaling
gives $\gamma\approx\tfrac12$ and min--max scaling $\gamma\approx1$, as predicted by \cref{prop:gamma}.
(b) Calibrated PTQ bit floors of MLPs (ten seeds; bars are seed-bootstrap 95\% intervals) with the slope
predicted by \cref{cor:bitfloor} from the measured noise exponent (dashed).}
\label{fig:bits}
\end{figure}

\subsection{The QAT paradox}\label{sec:qat}

Noise-aware training makes networks much more robust: at small depth it raises the tolerable noise
by a factor $g\approx1.8$--$2.2$ in every architecture we trained with noise---plain and both residual MLPs
on two datasets, the unscaled residual CNN and the pre-LN ViT (\cref{fig:qat}). If this benefit were
depth-independent, $g$ would be constant and, by the identity \eqref{eq:gain}, the QAT and PTQ depth
exponents would coincide. They do not. In every architecture we trained with noise, including the two
without a PTQ depth penalty, the gain decays with depth, so QAT raises the floor's level but steepens its
slope.

\begin{figure}[t]
\centering
\includegraphics[width=\linewidth]{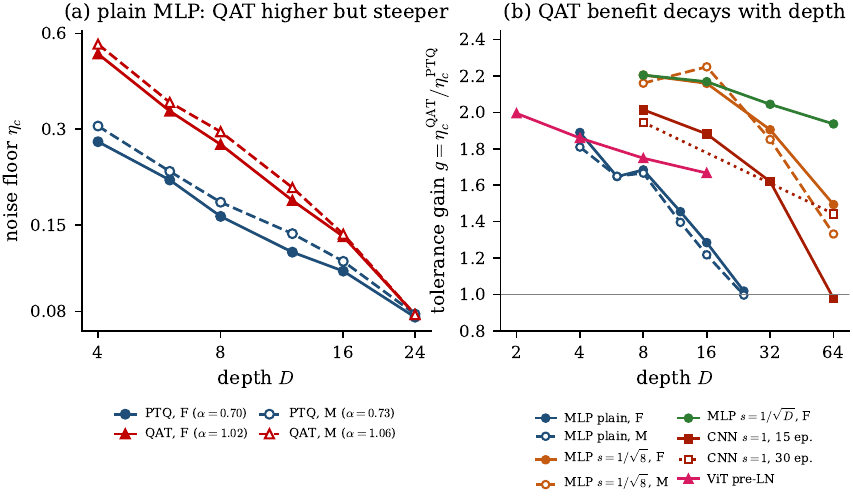}
\caption{\textbf{The QAT paradox.} (a) Plain MLP noise floors under PTQ (blue) and QAT (red) on
Fashion-MNIST (F, filled) and MNIST (M, open), ten seeds each. QAT is more tolerant at every depth but its floor
falls faster: $\aqat\approx1.5\,\aptq$ on both datasets. (b) The tolerance gain $g(D)$ of QAT over PTQ
decays with depth in every architecture, including the depth-scaled residual MLP and the pre-LN ViT, which
have no PTQ depth penalty. For the unscaled residual CNN, doubling the training budget (dotted) recovers
part, but not all, of the gain at $D=64$.}
\label{fig:qat}
\end{figure}

\begin{table}[t]
\centering\small
\caption{\textbf{QAT steepens the depth law.} Exponents of the noise floor with 95\% $t$-intervals; ratio
and difference with seed-bootstrap 95\% intervals. The ratio is undefined (n/a) when $\aptq\approx0$. $^\dagger$CNN rows use three seeds (15 epochs, four depths) and two seeds (30 epochs, depths 8 and 64 only); their exponents have no intervals and are indicative. For the
CNN, exponents are fitted over $D\in\{8,16,32,64\}$ (15 epochs) and $D\in\{8,64\}$ (30 epochs); for the ViT
over $D\in\{2,4,8,16\}$ (three seeds).}
\label{tab:qat}
\resizebox{\linewidth}{!}{\begin{tabular}{llccccc}
\toprule
Dataset & Architecture & seeds & $\alpha_{\mathrm{PTQ}}$ & $\alpha_{\mathrm{QAT}}$ & $\alpha_{\mathrm{QAT}}/\alpha_{\mathrm{PTQ}}$ & $\Delta\alpha$ \\
\midrule
Fashion-MNIST & MLP plain & 10 & $0.70\pm0.07$ & $1.02\pm0.11$ & $1.47$ {\scriptsize[1.37, 1.57]} & $+0.33$ {\scriptsize[+0.27, +0.37]} \\
MNIST & MLP plain & 10 & $0.73\pm0.08$ & $1.06\pm0.15$ & $1.45$ {\scriptsize[1.37, 1.54]} & $+0.33$ {\scriptsize[+0.29, +0.37]} \\
Fashion-MNIST & MLP res. $s=1/\sqrt{8}$ & 10 & $0.32\pm0.17$ & $0.51\pm0.39$ & $1.58$ {\scriptsize[1.39, 1.83]} & $+0.19$ {\scriptsize[+0.14, +0.23]} \\
MNIST & MLP res. $s=1/\sqrt{8}$ & 10 & $0.28\pm0.12$ & $0.51\pm0.40$ & $1.86$ {\scriptsize[1.62, 2.17]} & $+0.24$ {\scriptsize[+0.20, +0.28]} \\
Fashion-MNIST & MLP res. $s=1/\sqrt{D}$ & 10 & $-0.04\pm0.05$ & $0.03\pm0.02$ & n/a & $+0.06$ {\scriptsize[+0.04, +0.09]} \\
CIFAR-10 & CNN res. $s=1$ (15 epochs)$^\dagger$ & 3 & $0.04$ & $0.38$ & n/a & $+0.33$ \\
CIFAR-10 & CNN res. $s=1$ (30 epochs)$^\dagger$ & 2 & $0.01$ & $0.15$ & n/a & $+0.14$ \\
CIFAR-10 & ViT pre-LN & 3 & $-0.09\pm0.02$ & $0.00\pm0.04$ & n/a & $+0.09$ {\scriptsize[+0.07, +0.10]} \\
\bottomrule
\end{tabular}
}
\end{table}

\Cref{tab:qat} quantifies the effect. For plain MLPs the ratio of exponents is $1.47$ [1.37, 1.57] on
Fashion-MNIST and $1.45$ [1.37, 1.54] on MNIST: two independent datasets, ten seeds each, the same number.
For fixed-scale residual MLPs it is $1.58$ [1.39, 1.83] and $1.86$ [1.62, 2.17]. Expressed as a
difference, noise-aware training adds $\Delta\alpha = +0.19$ to $+0.33$ to the depth exponent. Even the
depth-scaled residual MLP, whose PTQ floor is flat, shows a small but significant steepening
($\Delta\alpha = +0.06$ [0.04, 0.09], ten seeds): its gain decays from $2.2\times$ at $D=8$ to $1.9\times$ at $D=64$.
In plain MLPs the gain has vanished by $D=24$ ($g = 1.02$ on Fashion-MNIST, $1.00$ on MNIST).

The ratio is not a universal constant: it differs between architectures and is undefined when
$\aptq\approx0$. The robust statement is the sign, $\Delta\alpha>0$, together with a gain that decays with
depth.

\paragraph{Transformers.}
We repeated the comparison for pre-LN ViTs at depths 2--16 (three seeds, one network trained for 30 epochs
per noise level). This architecture has no PTQ depth penalty: its floor rises slightly with depth
($\aptq=-0.09\pm0.02$). The QAT pattern is nevertheless the same as in the depth-scaled residual MLP.
Noise-aware training doubles the tolerable noise at $D=2$ ($g=2.0$), the gain decays steadily to $1.67$ at
$D=16$, the QAT floor stays flat ($\aqat=0.00\pm0.04$), and $\Delta\alpha=+0.09$ [0.07, 0.10]. Noise-aware
training therefore steepens the depth law in transformers as well; here it cancels the gain in tolerance
that depth brings under PTQ.

\paragraph{Training budget.}
Could the decay simply mean that deeper noise-aware networks need more training? For the unscaled
residual CNN at 15 epochs, the gain falls from $2.0\times$ at $D=8$ to $1.0\times$ at $D=64$
($\Delta\alpha = +0.33$), and the $D=64$ models are visibly undertrained (full-precision accuracy $0.63$
against $0.77$ at $D=8$). Our pre-registered control doubled the budget to 30 epochs. Full-precision
accuracy at $D=64$ rose to $0.73$ and the gain at $D=64$ recovered to $1.44\times$, while the gain at $D=8$
was unchanged ($1.95\times$); $\Delta\alpha$ fell from $+0.33$ to $+0.14$. The outcome lies between the two
pre-registered alternatives (full recovery to $\approx2\times$, or none): in this CNN, about half of the
apparent steepening is a training-budget effect and about half persists with twice the training. In MLPs
the same caveat applies with less force: full-precision accuracy of plain MLPs is stable up to $D=16$ and
drops only at $D=24$ ($\approx0.81$ to $0.74$), yet the gain has already fallen from $1.9\times$ to
$1.3\times$ by $D=16$, and the effect is quantitatively identical on two datasets. A budget component at
the largest depths cannot be excluded, and we state the QAT result accordingly.

\subsection{Where the QAT paradox comes from}\label{sec:mechanism}
(The per-layer analysis in this subsection uses the logit-norm sensitivities of \cref{prop:sum}; the
factorisation below holds for either metric.)

Because $\etac = \Lambda/G$ (\cref{prop:collapse}), the tolerance gain of QAT factorises exactly into an
amplification factor and a collapse factor,
\begin{equation}
  g(D) \;=\; \underbrace{\frac{G_{\mathrm{PTQ}}(D)}{G_{\mathrm{QAT}}(D)}}_{\text{first-order robustness}}
  \;\times\;
  \underbrace{\frac{\Lambda_{\mathrm{QAT}}(D)}{\Lambda_{\mathrm{PTQ}}(D)}}_{\text{collapse regime}} .
  \label{eq:factor}
\end{equation}
To measure the first factor layer by layer, we injected small noise ($\eta=0.01$, 50 draws) into the
sites of one layer at a time and computed each layer's sensitivity $q_l$, whose sum is $G^2$
(\cref{prop:sum}). We did this for plain MLPs at $D\in\{4,8,16,24\}$ (five seeds), comparing networks
trained without noise to networks trained with noise at the PTQ collapse level of the same depth.
Two candidate mechanisms from our pre-registered list make opposite predictions here: finite repair
capacity predicts that QAT lowers $q_l$ in only a few layers, and credit assignment predicts that it
lowers $q_l$ mainly in late layers.

\Cref{fig:mechanism}(a) rejects both. Under PTQ every hidden layer contributes $q_l\approx2$--$3$, which is
the growth law of \cref{prop:growth} resolved layer by layer. QAT lowers the sensitivity of \emph{every}
layer, by a factor of two to five, and it lowers it \emph{least} in the last third of the network
(ratio $0.38$--$0.53$ against $0.25$--$0.32$ in the hidden layers of the first two thirds). Summed over layers, QAT reduces
$G^2$ by a factor that does not depend on depth ($0.30$, $0.34$, $0.40$ and $0.33$ at $D=4$, $8$, $16$ and
$24$), so the amplification factor in \eqref{eq:factor} is essentially constant, $1.6$--$1.8$
(\cref{fig:mechanism}(b)). A constant amplification factor alone would give a constant gain and no
paradox. The depth dependence therefore lies entirely in the collapse factor
$\Lambda_{\mathrm{QAT}}/\Lambda_{\mathrm{PTQ}}$, which falls from $1.03$ at $D=4$ to $0.59$ at $D=24$:
\emph{QAT buys first-order robustness uniformly across depth, but deep QAT networks break down earlier in
the large-perturbation regime where \eqref{eq:firstorder} no longer holds.} This places the paradox in the
same regime as the plain-MLP exception to the predictor (\cref{sec:predictor}), where $\Lambda$ also drifts
downward with depth. Because the networks for the layer map were trained at a single noise level per depth,
whereas the gains in \cref{sec:qat} come from training a separate network at each noise level, the
factorisation here is approximate; the qualitative conclusion does not depend on it.

\begin{figure}[t]
\centering
\includegraphics[width=\linewidth]{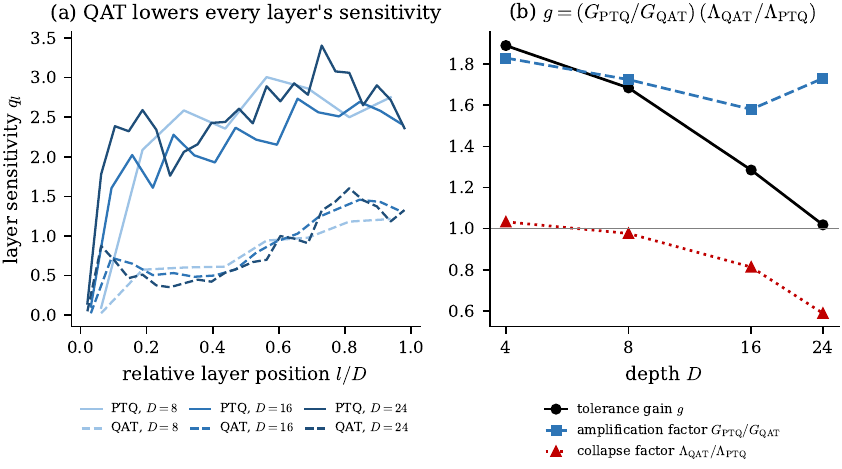}
\caption{\textbf{Mechanism of the QAT paradox (plain MLPs, five seeds).} (a) Per-layer sensitivity $q_l$
(whose sum is $G^2$) against relative position, for PTQ (solid) and QAT (dashed) networks. QAT lowers every
layer's sensitivity, least in the last layers. (b) Factorisation \eqref{eq:factor} of the tolerance gain:
the amplification factor is depth-independent, so the decay of the gain comes entirely from the collapse
factor $\Lambda_{\mathrm{QAT}}/\Lambda_{\mathrm{PTQ}}$.}
\label{fig:mechanism}
\end{figure}

\section{What does not set the floor}\label{sec:negative}

We report three natural hypotheses that the data reject, because each would otherwise be a plausible
alternative explanation of the laws above.

\paragraph{Decision margins.}
A common intuition is that the precision a classifier needs is set by how much data lies near its decision
boundary. We manipulated margins directly at fixed depth ($D=8$ and $16$) with label smoothing
($0$--$0.5$) and label noise ($0$--$40\%$), which changed the median normalised margin
(top-two logit gap over the root-mean-square logit) by up to a factor of $5$ and the fraction of low-margin
test points by up to a factor of $6$. \Cref{fig:negative}(a) shows the effect on the noise floor. Under
noise-aware training the floor did not move (all conditions within $\pm12\%$ of baseline, with
$\etac\cdot D = 2.20\pm0.11$ across all twelve conditions); under PTQ, larger margins bought $9$--$16\%$
more tolerance at $D=8$ (at most $0.4$ bits with the LSQ-style quantizer) and at most $12\%$ at $D=16$,
where label smoothing gave no measurable gain at all. Depth changed the floor by far more. In \cref{prop:collapse}, this means that the
task constant $\Lambda$ is not governed by the median margin, and that the margin-corrected predictor
$\etac\propto M/G$ fails: in eleven of the thirteen trained architectures it predicts the depth exponent
worse than $1/G$ alone, and it helps only in the two architectures where $1/G$ is least accurate (the
plain MLP and the undertrained post-norm CNN).

\paragraph{Cross-layer error cancellation.}
\citet{chen2026ptq} attribute the success of PTQ in language models to \emph{counteraction}: new
quantization error partially cancels error inherited from earlier layers. We decomposed the error at
every layer into an inherited and a new component and measured the cancellation fraction
$c=-2\langle P,N\rangle/\lVert N\rVert^2$ at 6 bits, in the three MLP families (PTQ and QAT, three seeds,
depths 8--64) and in all saved CNNs (PTQ with calibrated steps, three seeds, depths 4--64; the Dirac CNN is
excluded because 6-bit LSQ-style quantization already destroys it). In MLPs $c = -0.01\pm0.11$, with no
systematic difference between PTQ and QAT and no trend with depth. In CNNs the median over models is
$-0.05$ (interquartile range $-0.13$ to $0.02$), and where $c$ departs from zero it is mostly negative:
new and inherited errors add rather than cancel. Counteraction therefore does not explain the laws in our
networks. Instead, QAT plain MLPs \emph{damp} inherited error: the per-layer gain of the inherited
component is $0.96$--$0.98$ under QAT against $1.01$--$1.05$ under PTQ. In residual MLPs QAT does not
change this gain.

\paragraph{Activation heavy tails.}
\citet{residualfree2026} argue that residual connections make activations heavy-tailed and therefore
harder to quantize. In our residual CNNs the excess kurtosis of the activations entering the quantizers
indeed grows strongly with depth (\cref{fig:negative}(b)), reaching $100$--$120$ at $D=64$ for both the
fixed-scale ($s=1/\sqrt8$) and unscaled ($s=1$) networks. Yet only the latter showed an excess
quantization penalty in its bit slope, and that excess disappeared with longer training (\cref{sec:bits});
in the fixed-scale network the ratio $\eta_q(\bc)/\etac$ stayed at $0.7$--$0.85$ at all depths. Heavy tails
are therefore not by themselves sufficient to make quantization disproportionately harmful. Among the
unnormalised residual networks, depth-scaled branches keep the tails lightest (kurtosis $29$ at $D=64$);
pre-normalisation keeps them near $8$ at every depth.

\begin{figure}[t]
\centering
\includegraphics[width=\linewidth]{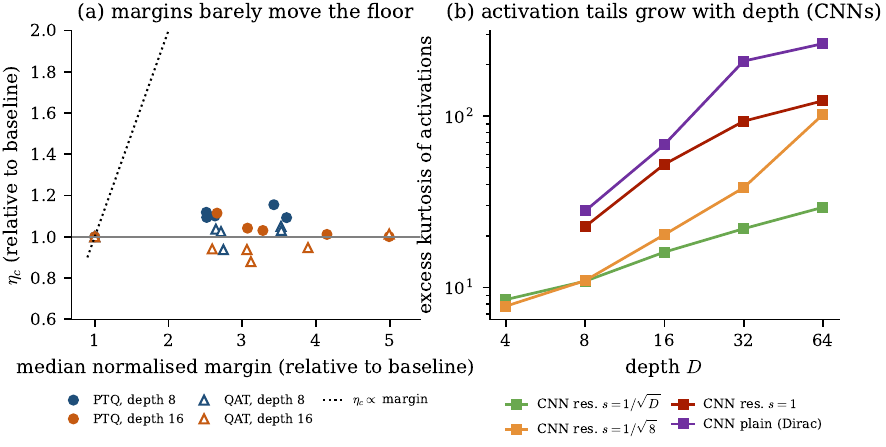}
\caption{\textbf{What does not set the floor.} (a) Label smoothing and label noise change the median
normalised margin by up to $5\times$ but move the noise floor by at most $\pm16\%$; the dotted line
shows the proportional response a margin-controlled floor would have. (b) Activation kurtosis grows with
depth in all residual and Dirac-initialised CNNs, but extra quantization harm appears only transiently in
one of them.}
\label{fig:negative}
\end{figure}

\section{Discussion and limitations}\label{sec:discussion}

\paragraph{What the laws say, and what they do not.}
The precision floor of a network is not a property of its size or its data alone: at fixed width, the
architecture decides whether depth costs precision at all, and where it does, the cost is predicted by a
full-precision measurement. Two practical consequences follow. First, the noise floor of a model can be
estimated in seconds, without any quantized simulation, as $\etac\approx\Lambda/G$ with
$\Lambda\approx1.45$ for the networks we trained and $\approx1$ for GPT-2; \cref{cor:bitfloor} then gives the
bit budget for any quantizer whose rate $\gamma_q$ is known. Second, depth-scaled residual branches and
pre-normalisation are not only stabilisers of training \citep{hayou2023width,bordelon2024depthwise} but also
make precision requirements depth-independent. The laws are empirical power laws in the sense of
\citet{kaplan2020scaling}: their exponents are architecture-specific rather than universal constants, and the
theory of \cref{sec:framework} explains their origin under stated assumptions rather than deriving them from
first principles. Note that $\Lambda$ need not be identical across tasks: it differs between our ten-class
image models and next-token prediction by a factor of about $1.4$, but within a task and family it is
constant across depth.

\paragraph{The measurement rule and the validity condition.}
Two lessons from the language models apply generally. The logit norm is a misleading measure of
amplification whenever most of the logit vector is irrelevant to the prediction; the predictive definition
\eqref{eq:Gkl} removes this problem and, for ten-class models, tracks the collapse exponent slightly better
than the logit norm does. And because the law is first-order, it should only be applied to models whose
amplification does not depend on the probe noise level; the ratio $G(0.002)/G(0.01)$ is a cheap diagnostic.
The two smallest Pythia models fail it, presumably because of their extreme activation outliers, which make
even $0.2\%$ perturbations act nonlinearly. Pythia-410M passes the check but has $\Lambda\approx2$--$4$; we do
not have an explanation.

\paragraph{The plain-MLP exception.}
Plain MLPs of width 32 collapse faster ($\aptq=0.70$) than their amplification predicts ($\rho=0.55$), and
correspondingly $\Lambda$ drifts downward with depth. At width 128 the gap shrinks to $0.53$ versus $0.47$
(\cref{app:width}), so the exception is largely a narrow-network effect. Narrow plain networks have the
largest amplification of all our architectures, so at their floor the output perturbation is largest relative
to the signal, which is where the second-order terms neglected in \eqref{eq:firstorder} matter most.

\paragraph{Towards a theory of the QAT paradox.}
\Cref{sec:mechanism} narrows the question considerably. Of the mechanisms we considered, finite repair
capacity and credit assignment are ruled out by the layer map: QAT lowers the sensitivity of every layer, by a
depth-independent total factor, and least in the last layers. What remains is that the collapse constant
$\Lambda$ of QAT networks decreases with depth relative to that of PTQ networks. $\Lambda$ is set at the
collapse point, where perturbations are large and the first-order expansion fails, so a theory of the paradox
must be a theory of the \emph{nonlinear} breakdown of trained networks under noise; the plain-MLP exception
points the same way. Two routes seem promising: a second-order extension of \cref{prop:sum}, which would give
the leading correction to $\etac=\Lambda/G$; and the observation that relative activation noise is Gaussian
dropout, whose implicit regulariser in deep linear networks is known in closed form \citep{mianjy2018implicit}.
The problem was originally motivated by the error threshold of evolutionary biology
\citep{eigen1971selforganization}, in which selection maintains information only while the per-site error rate
stays below a threshold inversely proportional to genome length; plain MLPs under QAT indeed show
$\aqat\approx1$, but the residual results rule out a universal $1/D$ law, so we keep the analogy as
motivation only.

\paragraph{Relation to residual-free transformers.}
\citet{residualfree2026} report that residual connections hurt low-bit quantization in transformers. Our
results suggest a refinement: what matters for the depth dependence is how the residual stream is scaled.
Unnormalised, unscaled branches let amplification and activation tails grow with depth; $1/\sqrt D$ branch
scaling or pre-normalisation keeps amplification flat, and our pre-LN ViTs need no more bits at depth 32 than
at depth 2. The quantizer-level harm of heavy tails, which \citet{residualfree2026} identify, is a separate
effect that our per-tensor quantizers expose mainly in undertrained models (\cref{sec:bits}).

\paragraph{Limitations.}
Our trained networks are small (width-32 MLPs, 64-channel CNNs, width-128 ViTs on CIFAR-10) and trained
briefly; the pretrained language models reach 1.5 billion parameters but are only evaluated, so the QAT
paradox is established for MLPs, one CNN and one small ViT, but not for large pretrained transformers. We test BatchNorm nowhere and
normalisation only as LayerNorm. CNN and ViT results use three seeds, and several exponents rest on four or
five depths; the fixed-scale residual MLP is not a clean power law (its local exponent rises with depth, as
\cref{prop:growth}(c) predicts). The growth constant $\kappa$ depends on architecture and width and is not
predicted by our theory. We study one quantizer family with two step rules and uniform bit-widths across
layers; at the floor, real quantization error is up to about twice as harmful as Gaussian noise of equal RMS
size in the CIFAR-10 networks and about as harmful in the MLPs, which the bridge absorbs into an intercept. Finally, the language-model comparison mixes depth with width (larger
GPT-2 models are also wider), so the GPT-2 result shows that amplification predicts the floor across the
family, not that depth alone causes the trend.

\section{Conclusion}\label{sec:conclusion}

How many bits a network needs depends on how deep it is, and in a predictable way. Across MLPs, CNNs, Vision
Transformers and pretrained GPT-2 models, the precision floor follows a power law in depth whose exponent is
the depth exponent of the network's predictive amplification. The collapse constant $\Lambda=\etac G$ is
nearly the same across architectures and depths, so the floor can be read off a full-precision measurement,
and a simple linearity check tells when this reading is valid. Architecture decides whether depth costs
precision at all: depth-scaled residual branches and pre-normalisation remove the penalty, and a well-scaled
transformer can need fewer bits as it grows deeper. Each quantizer turns noise into bits at a rate fixed by
its step rule. Noise-aware training raises the tolerable noise but steepens its depth dependence---a paradox
that replicates on two datasets and in a Vision Transformer, survives in part a doubled training budget, and
lives in the nonlinear collapse regime rather than in amplification. Explaining that regime, and testing the
laws on quantization-aware training of large transformers, are the natural next steps.

\paragraph{Reproducibility.}
All result files, the training and measurement code, and the script that generates every figure, table and
number in this paper from those files are available at \url{https://github.com/Ahmadtr/depth-laws-precision}.

\appendix
\section{Derivation of the growth law (\cref{prop:growth})}\label{app:growth}

We give the derivation for pre-activation residual networks; plain networks are the special case
discussed at the end.

\paragraph{Setting.} The stream evolves as $h_{l+1} = h_l + s\,u_l$ with branch output
$u_l = W_l\,\phi(h_l) + \beta_l$, $\phi=\mathrm{ReLU}$, for $l=1,\dots,D$, followed by a head
$z = W_{\mathrm{out}}\phi(h_{D+1})$. Block $l$ has $k=2$ sites: the weight $W_l$ and the activation
$a_l = \phi(h_l)$ entering the matrix multiplication.

\paragraph{Step 1: perturbation of the stream.}
A relative perturbation of size $\eta$ at the activation site changes the branch output by
$\delta u_l = \eta\,\rms(a_l)\,W_l\xi_l$, and at the weight site by
$\delta u_l = \eta\,\rms(W_l)\,\Xi_l a_l$. With variance-preserving weights, $\E\norm{W\xi}^2 \approx
\E\norm{u_l}^2/\rms(a_l)^2$ and $\E\norm{\Xi a}^2\approx \E\norm{u_l}^2/\rms(W_l)^2$ up to $O(1)$
constants, so in both cases $\E\norm{\delta u_l}^2 \approx c_u^2\,\eta^2\,\E\norm{u_l}^2$ for a constant
$c_u$. The stream is perturbed by $s\,\delta u_l$.

\paragraph{Step 2: size of the branch relative to the stream.}
Because $u_l$ and $h_l$ are approximately uncorrelated at a variance-preserving operating point,
$\E\norm{h_{l+1}}^2 \approx (1+s^2 r^2)\,\E\norm{h_l}^2$ with $r^2 = \E\norm{u_l}^2/\E\norm{h_l}^2 = O(1)$.
The relative perturbation injected into the stream at block $l$ therefore has squared size
\[
  \epsilon_l^2 \;=\; \frac{s^2\,\E\norm{\delta u_l}^2}{\E\norm{h_{l+1}}^2}
  \;\approx\; \eta^2\,\frac{c_u^2\, s^2 r^2}{1+s^2r^2}.
\]

\paragraph{Step 3: propagation to the logits.}
By \cref{ass:iso}, a relative stream perturbation of size $\epsilon$ reaches the logits as a relative
perturbation of size $c_h\,\epsilon$, independently of the block. Independence of the $\xi$ across sites
(\cref{prop:sum}) makes the contributions additive:
\[
  G^2 \;=\; G_0^2 \;+\; \sum_{l=1}^{D}\sum_{\text{sites}}\frac{\epsilon_l^2}{\eta^2}\,c_h^2
  \;\approx\; G_0^2 + k\,c^2\,\frac{s^2}{1+s^2r^2}\,D ,
\]
with $c^2 = c_u^2 c_h^2 r^2$. For $s^2 r^2\ll1$ (which holds for $s=1/\sqrt8$ and a fortiori for
$s=1/\sqrt D$) the denominator is $1+O(s^2)$, giving $G^2 = G_0^2 + \kappa s^2 D$ with $\kappa = kc^2$.

\paragraph{Plain networks.} Without the identity path, $h_{l+1} = \phi(W_l h_l)$ and each layer
re-normalises the signal. A relative perturbation at either site of layer $l$ produces a relative
perturbation of order one in $h_{l+1}$, which reaches the logits with factor $c_h$. The same summation
gives $G^2 = G_0^2 + \kappa D$, i.e.\ the residual formula with $s=1$.

\paragraph{Consequences.} With $s = s_0/\sqrt D$ the sum is $\kappa s_0^2$, independent of $D$. With
fixed $s$ it grows linearly in $D$; the local exponent follows by differentiating $\tfrac12\log G^2$ with
respect to $\log D$. \hfill$\square$

\section{Measurement artefacts and how they were corrected}\label{app:artefacts}

We document the artefacts found during the study, because each produced an apparently significant result
that did not survive a better measurement.
\begin{enumerate}[leftmargin=*,itemsep=2pt]
  \item \emph{Noisy amplification estimates.} Early estimates of $G$ used five noise draws. Their
  run-to-run scatter ($\pm10$--$20\%$ per model) is comparable to the entire depth trend of the CNNs
  ($\approx25\%$ between $D=4$ and $64$) and produced two spurious failures of the predictor. All values in
  this paper use 50 draws at three noise levels; the linearity check $G(0.001)/G(0.01)=0.98$--$1.06$
  (architecture means) confirms the small-noise regime.
  \item \emph{Batch-dependent activation scales.} Computing activation quantizer steps from each test
  batch made one image's quantization depend on the other images in its batch and inflated the bit slope
  of plain CNNs ($+0.16$ vs.\ $+0.05$ after calibration). All bit floors reported here use steps
  calibrated once on training data.
  \item \emph{Mean-based step rule and identity-shaped weights.} The LSQ-style rule clips heavily when a
  tensor has a few large entries and many small ones, as in Dirac-initialised kernels (8-bit relative
  error $0.11$--$0.33$ instead of $0.03$). We therefore report no bit floor for that architecture.
  \item \emph{Undertrained deep models.} At 15 epochs the deepest unscaled residual and Dirac CNNs are
  undertrained (full-precision accuracy $0.61$--$0.63$). This inflated both the QAT steepening and the
  bit slope of the unscaled residual CNN; the 30-epoch control (\cref{sec:qat}) quantifies both effects.
  \item \emph{Ratio of exponents.} The ratio $\aqat/\aptq$ is numerically unstable when $\aptq\approx0$
  (a preliminary CNN estimate was $\approx4$). We report the difference $\Delta\alpha$ as the primary
  statistic.
\end{enumerate}

\section{Width}\label{app:width}

\Cref{prop:growth} is stated per layer and says nothing about width. We repeated the MLP measurements at
widths 128 and 512 (five seeds, Fashion-MNIST, logit-norm amplification). The growth law's linear form and
the depth exponents survive; the slope $\kappa$ does not, and falls steeply with width.

\begin{table}[h]
\centering\small
\caption{Width study (plain and fixed-scale residual MLPs; ten seeds at width 32, five at widths 128 and
512). $\kappa$: slope of $G^2$ against $D$ per unit $s^2$ (logit metric); $\rho$: amplification exponent
(logit metric); $\aptq$: collapse exponent. All intervals are 95\%.}
\label{tab:width}
\begin{tabular}{lrccc}
\toprule
Architecture & width & $\kappa$ & $\rho$ & $\alpha_{\mathrm{PTQ}}$ \\
\midrule
plain & 32 & $2.05$ {\scriptsize[1.79, 2.31]} & $0.51\pm0.05$ & $0.70\pm0.07$ \\
plain & 128 & $0.25$ {\scriptsize[0.19, 0.30]} & $0.47\pm0.03$ & $0.53\pm0.01$ \\
plain & 512 & $0.03$ {\scriptsize[0.03, 0.04]} & $0.32\pm0.04$ & $0.53\pm0.18$ \\
residual, $s=1/\sqrt8$ & 32 & $1.68$ {\scriptsize[1.20, 2.17]} & $0.33\pm0.07$ & $0.32\pm0.17$ \\
residual, $s=1/\sqrt8$ & 128 & $0.61$ {\scriptsize[0.53, 0.69]} & $0.42\pm0.05$ & -- \\
residual, $s=1/\sqrt8$ & 512 & $0.15$ {\scriptsize[0.14, 0.17]} & $0.41\pm0.04$ & -- \\
\bottomrule
\end{tabular}

\end{table}

Two observations matter for the main text. First, the agreement between $\aptq$ and $\rho$ for plain MLPs
improves with width ($0.70$ vs.\ $0.51$ at width 32; $0.53$ vs.\ $0.47$ at width 128), so the plain-MLP
exception of \cref{sec:predictor} is largely a narrow-network effect. Second, the value $\kappa\approx2$
that we measured at width 32, and which matched a naive count of two perturbation sites per layer, was a
coincidence of that width: $\kappa$ is an architecture- and width-dependent constant.

\vskip 0.2in
\bibliography{refs}

\end{document}